\def\1{\bm{1}}
\DeclareMathAlphabet{\mathsfit}{\encodingdefault}{\sfdefault}{m}{sl}
\SetMathAlphabet{\mathsfit}{bold}{\encodingdefault}{\sfdefault}{bx}{n}
\newcommand{\KL}{D_{\mathrm{KL}}}
\newcolumntype{H}{>{\setbox0=\hbox\bgroup}c<{\egroup}@{}}
\def\concrete{mixed\xspace}
\definecolor{theoremcolor}{rgb}{0.94, 0.94, 0.94}
\definecolor{examplecolor}{rgb}{1, 1, 1.0}
\newmdtheoremenv[linewidth=0pt,innerleftmargin=4pt,innerrightmargin=4pt]{definition}{Definition}
\newmdtheoremenv[linewidth=0pt,innerleftmargin=4pt,innerrightmargin=4pt]{proposition}{Proposition}
\newmdtheoremenv[linewidth=0pt,innerleftmargin=0pt,innerrightmargin=0pt,backgroundcolor=examplecolor]{example}{Example}
\newmdtheoremenv{corollary}{Corollary}
\newmdtheoremenv{theorem}{Theorem}
\newmdtheoremenv{lemma}{Lemma}
\title{Sparse communication via mixed distributions}
\author{António Farinhas~\textsuperscript{1}, Wilker Aziz~\textsuperscript{2}, Vlad Niculae~\textsuperscript{3}, André F. T. Martins~\textsuperscript{1,4}\\
\textsuperscript{1}Instituto de Telecomunicações, Instituto Superior Técnico (Lisbon ELLIS Unit), \\
\textsuperscript{2}ILLC, University of Amsterdam, 
~ \textsuperscript{3}IvI, University of Amsterdam, 
~ \textsuperscript{4}Unbabel \\
\fontsize{8}{10}\texttt{\{antonio.farinhas,andre.t.martins\}@tecnico.ulisboa.pt}, \ \texttt{\{w.aziz,v.niculae\}@uva.nl}\\
}
\begin{document}

\maketitle

\begin{abstract}
Neural networks and other machine learning models compute continuous representations, while humans communicate mostly through discrete symbols.
Reconciling these two forms of communication is desirable for generating human-readable interpretations or learning discrete latent variable models, while maintaining end-to-end differentiability.
Some existing approaches (such as the Gumbel-Softmax transformation) build continuous relaxations that are discrete approximations in the zero-temperature limit, while others (such as sparsemax transformations and the Hard Concrete distribution) produce discrete/continuous hybrids.
In this paper, we build rigorous theoretical foundations for these hybrids, which we call ``mixed random variables.''
Our starting point is a new ``direct sum'' base measure defined on the face lattice of the probability simplex.
From this measure, we introduce new entropy  and Kullback-Leibler divergence functions that subsume the discrete and differential cases and have interpretations in terms of code optimality. 
Our framework suggests two strategies for representing and sampling mixed random variables, an extrinsic (``sample-and-project'’) and an intrinsic one (based on face stratification).
We experiment with both approaches on an  emergent communication benchmark and on modeling MNIST and Fashion-MNIST data with variational auto-encoders with mixed latent variables. %
{Our code is \href{https://github.com/deep-spin/sparse-communication}{publicly available}.}
\end{abstract}

\section{Introduction}

Historically, discrete and continuous domains have been considered separately in machine learning, information theory, and engineering applications: random variables (r.v.) and information sources are chosen to be either discrete or continuous, but not \textit{both} \citep{shannon1948mathematical}. 
In signal processing, one needs to opt between discrete (digital) and continuous (analog) communication, whereas analog signals can be converted into digital ones by means of sampling and quantization. 

Discrete latent variable models are appealing to facilitate learning with less supervision, to leverage prior knowledge, and to build more compact and interpretable models. 
However, training such models is challenging due to the need to evaluate a large or combinatorial expectation.
Existing strategies include the score function estimator \citep{williams1992simple,mnih2014neural}, pathwise gradients \citep{kingma2013auto} combined with a continuous relaxation of the latent variables (such as the Concrete distribution, \cite{maddison2016concrete,jang2016categorical}), and sparse parametrizations \citep{correia2020efficient}. 
Pathwise gradients, in particular, require continuous approximations of quantities that are inherently discrete, sometimes requiring proxy gradients \citep{jang2016categorical,maddison2016concrete}, sometimes giving the r.v.\  different treatment in different terms of the same objective \citep{jang2016categorical}, sometimes creating a discrete-continuous hybrid \citep{louizos2018learning}. 

Since discrete variables and their continuous relaxations are so prevalent, they deserve a rigorous mathematical study. Throughout, we will use the name \textbf{\concrete variable} to denote a hybrid variable that takes on both discrete and continuous outcomes. %
This work takes a first step into a rigorous study of \concrete variables and their properties. 
We will call communication through \concrete variables \textbf{sparse communication}:
its goal is to retain the advantages of differentiable computation but still be able to represent and approximate discrete symbols. 
Our main contributions are:
\begin{itemize}[leftmargin=0cm,labelindent=0cm,itemindent=.5cm,topsep=0pt,itemsep=-.5ex,partopsep=.5ex,parsep=1ex]

\item We provide a \textbf{direct sum measure} as an alternative to the Lebesgue and counting measures used for continuous and discrete variables, respectively \citep{halmos2013measure}. The direct sum measure hinges on a \textbf{face lattice stratification} of polytopes, including the probability simplex,
avoiding the need for Dirac densities when expressing densities with point masses in the boundary of the simplex (\S\ref{sec:faces_mixed}).
\item We use the direct sum measure to formally define $K\textsuperscript{th}$-dimensional \textbf{mixed random variables}. We provide extrinsic (``sample-and-project'') and intrinsic (based on face stratification) characterizations of these variables, leading to several new distributions: the K-D Hard Concrete, the Gaussian-Sparsemax, and the Mixed Dirichlet (summarized in Table~\ref{tab:distributions}). See Figure~\ref{fig:multivar-gaussian-spmax} for an illustration.
\item We propose a new \textbf{direct sum entropy and Kullback-Leibler divergence}, which decompose as a sum of discrete and continuous (differential) entropies/divergences. We provide an interpretation in terms of optimal code length, and we derive an expression for the maximum entropy %
(\S\ref{sec:information_theory}). 
\item We illustrate the usefulness of our framework by learning \textbf{mixed latent variable models} in an emergent communication task and with VAEs to model Fashion-MNIST and MNIST data %
(\S\ref{sec:experiments}).  %
\end{itemize}

\begin{table*}[t]
\caption{Discrete, continuous, and mixed distributions considered in this work, all involving the probability simplex $\triangle_{K-1}$. For each distribution we indicate if it assigns probability mass to all faces of the simplex or only some, if it is multivariate ($K\ge 2$), and, for mixed distributions, if it is characterized extrinsically (sample-and-project) or intrinsically (based on face stratification).}
\label{tab:distributions}
\begin{small}
\begin{center}
\begin{tabular}{
>{\arraybackslash}m{7cm} 
>{\arraybackslash}m{1.8cm}
>{\arraybackslash}m{1.5cm}
>{\arraybackslash}m{1.5cm}}
\toprule
Distribution &  All faces? & Multivariate? & Intrinsic? \\ \midrule
Categorical       & Discrete \scriptsize \textcolor{red!80!black}{\XSolidBrush}  & Yes \scriptsize \textcolor{green!30!black}{\Checkmark}  & --  \\ 
Dirichlet, Logistic-Normal, Concrete      & Continuous \scriptsize \textcolor{red!80!black}{\XSolidBrush}  & Yes \scriptsize \textcolor{green!30!black}{\Checkmark}  & --  \\ 
\midrule 
Hard Concrete, Rectified Gaussian  & Mixed \scriptsize \textcolor{green!30!black}{\Checkmark} & No \scriptsize \textcolor{red!80!black}{\XSolidBrush} & Extrinsic \scriptsize \textcolor{red!80!black}{\XSolidBrush}\\ 
$K$-D Hard Concrete, Gaussian-Sparsemax (this paper) & Mixed \scriptsize \textcolor{green!30!black}{\Checkmark} & Yes \scriptsize \textcolor{green!30!black}{\Checkmark} & Extrinsic \scriptsize \textcolor{red!80!black}{\XSolidBrush}\\ 
Mixed Dirichlet (this paper) & Mixed \scriptsize \textcolor{green!30!black}{\Checkmark} & Yes \scriptsize \textcolor{green!30!black}{\Checkmark} & Intrinsic \scriptsize \textcolor{green!30!black}{\Checkmark}\\ 
\bottomrule
\end{tabular}
\end{center}
\end{small}
\end{table*}

\section{Background}

We assume throughout an alphabet with $K \ge 2$ symbols, denoted $[K] = \{1, \ldots, K\}$. 
Symbols can be encoded as one-hot vectors $\bm{e}_k$. 
$\mathbb{R}^K$ denotes the $K$-dimensional Euclidean space,
$\mathbb{R}^K_{>0}$ its strictly positive orthant, 
and $\triangle_{K-1} \subseteq \mathbb{R}^K$ the \textbf{probability simplex}, $\triangle_{K-1} := \{\bm{y} \in \mathbb{R}^K \mid \bm{y}\ge \mathbf{0}, \,\, \mathbf{1}^\top \bm{y} = 1 \}$, 
with vertices $\{\bm{e}_1,\dots,\bm{e}_K\}$. 
Each $\bm{y} \in \triangle_{K-1}$ can be seen as a vector of probabilities for the $K$ symbols, parametrizing a categorical distribution over $[K]$. %
The \textbf{support} of $\bm{y} \in \triangle_{K-1}$ is the set of nonzero-probability symbols $\mathrm{supp}(\bm{y}) := \{k \in [K] \mid y_k > 0\}$. 
The set of full-support categoricals corresponds to the \textbf{relative interior} of the simplex, $\mathrm{ri}(\triangle_{K-1}) := \{\bm{y} \in \triangle_{K-1} \mid \mathrm{supp}(\bm{y}) = [K]\}$.

\subsection{Transformations from $\mathbb{R}^K$ to $\triangle_{K-1}$}\label{sec:transforms} 

In many situations, there is a need to convert a vector of real numbers $\bm{z} \in \mathbb{R}^K$ (scores for the several symbols, often called \textit{logits}) into a probability vector $\bm{y} \in \triangle_{K-1}$. %
The most common choice is the \textbf{softmax} transformation \citep{bridle1990probabilistic}: $\bm{y} = \mathrm{softmax}(\bm{z}) \propto {\exp(\bm{z})}.$ Since the exponential function is strictly positive, softmax reaches only the relative interior $\mathrm{ri}(\triangle_{K-1})$, that is, it never returns a sparse probability vector. 
To encourage more peaked distributions (but never sparse) it is common to use a \textbf{temperature} parameter $\beta > 0$, by defining $\mathrm{softmax}_\beta(\bm{z}) := \mathrm{softmax}(\beta^{-1}\bm{z})$. The limit case $\beta \rightarrow 0_+$ corresponds to the indicator vector for the \textbf{argmax}, which returns a one-hot distribution indicating the symbol with the largest score. %
While the softmax transformation is differentiable (hence permitting end-to-end training with the gradient backpropagation algorithm), the argmax function has zero gradients almost everywhere. 
With small temperatures, numerical issues are common.

A direct sparse probability mapping is \textbf{sparsemax} \citep{Martins2016ICML}, the Euclidean projection onto the simplex: $\mathrm{sparsemax}(\bm{z}) := \arg\min_{\bm{y} \in \triangle_{K-1}} \|\bm{y} - \bm{z}\|$. 
Unlike softmax, sparsemax reaches the \textit{full} simplex $\triangle_{K-1}$, including the boundary, often returning a sparse vector $\bm{y}$, without sacrificing differentiability almost everywhere. With $K=2$ and parametrizing $\bm{z} = (z, 1-z)$, sparsemax becomes a ``hard sigmoid,'' %
$\big[\mathrm{sparsemax}\big((z, 1-z)\big)\big]_1 = \max\{0, \min\{1, z\}\}$. We will come back to this point in \S\ref{sec:extrinsic_intrinsic}. 
Other sparse transformations include %
$\alpha$-entmax \citep{peters2019sparse,blondel2020learning}, top-$k$ softmax \citep{fan2018hierarchical,radford2019language}, and others \citep{laha2018controllable,sensoy2018evidential,kong2020rankmax,itkina2020evidential}.

\begin{figure}[t]
\begin{center}
\includegraphics[width=0.65\columnwidth]{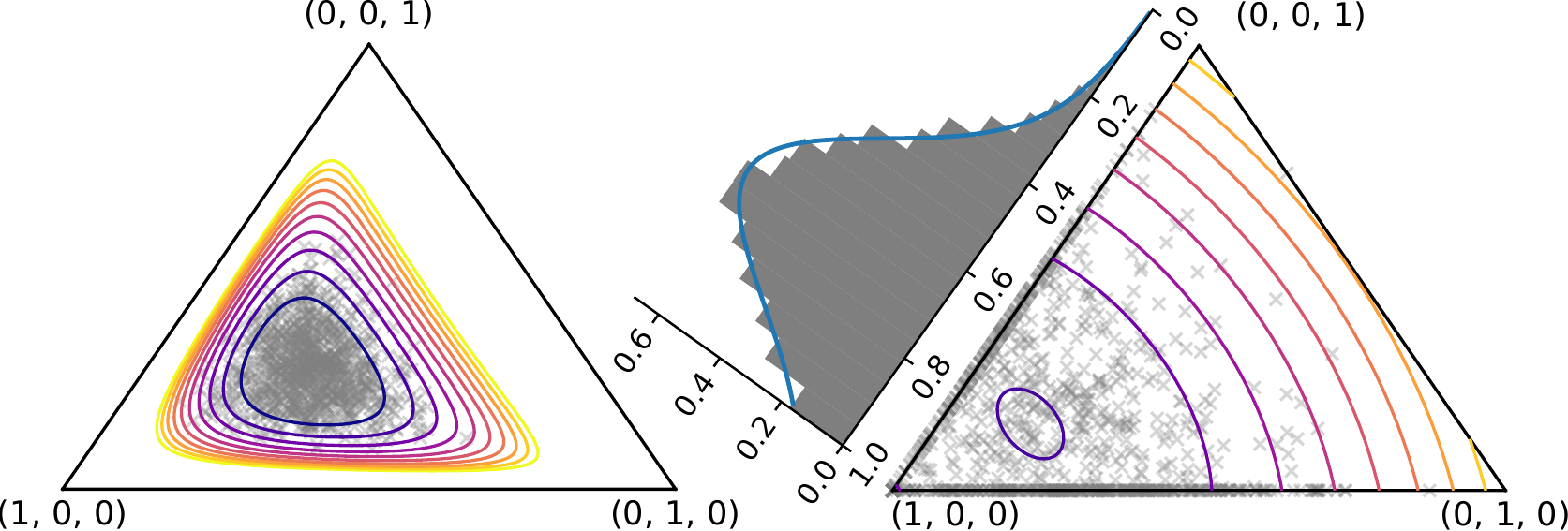}
\caption{Multivariate distributions over $\triangle_{K-1}$. Standard distributions, like the Logistic-Normal (left), assign zero probability to all faces but $\mathrm{ri}(\triangle_{K-1})$. Our \textbf{mixed distributions} support assigning probability to the \textit{full} simplex, including its boundary: the Gaussian-Sparsemax (right) induces a distribution over the 1-dimensional edges (shown as a histogram), and assigns $\text{Pr}\{(1, 0, 0)\} = .022$. \label{fig:mixed_comparison}}

\label{fig:multivar-gaussian-spmax}
\end{center}
\end{figure}

\subsection{Densities over the simplex}\label{sec:densities}

Let us now switch from deterministic to \textit{stochastic} maps. %
Denote by $Y$ a r.v.\ taking on values in the simplex $\triangle_{K-1}$ with probability density function $p_Y(\bm{y})$. %

The density of a \textbf{Dirichlet} r.v. $Y \sim \mathrm{Dir}(\bm{\alpha})$, with $\bm{\alpha} 
\in \mathbb{R}^K_{>0}$ is $p_Y(\bm{y}; \bm{\alpha}) \propto \prod_{k=1}^K y_k^{\alpha_k-1}$. 
Sampling from a Dirichlet produces a point in $\mathrm{ri}(\triangle_{K-1})$, and, 
although a Dirichlet can assign high density to $\bm{y}$ close to the boundary of the simplex when $\bm{\alpha} < \mathbf{1}$, %
a Dirichlet sample can \textit{never} be sparse.  %

A \textbf{Logistic-Normal} r.v.\ \citep{atchison1980logistic}, also known as Gaussian-Softmax by analogy to other distributions to be presented, is given by the softmax-projection of a multivariate Gaussian r.v.\  with mean $\bm{z}$ and covariance $\Sigma$: $Y=\mathrm{softmax}(\bm{z} + \Sigma^{\frac{1}{2}}N_k) $ with $N_k\sim \mathcal{N}(0, 1)$.
Since the softmax is strictly positive, the Logistic-Normal places no probability mass to points in the boundary of $\triangle_{K-1}$.

A \textbf{Concrete} \citep{maddison2016concrete}, or {Gumbel-Softmax} \citep{jang2016categorical},  r.v. is given by the softmax-projection of $K$ independent Gumbel r.vs., each with mean $z_k$: $Y = \mathrm{softmax}_\beta(\bm{z} + G)$ with $G_k \sim \mathrm{Gumbel}(0, 1)$.  %
Like in the previous cases, a Concrete draw is a point in $\mathrm{ri}(\triangle_{K-1})$. 
When the temperature $\beta$ approaches zero, the softmax approaches the indicator for argmax and 
$Y$ becomes closer to a  categorical r.v. \citep{luce1959individual,papandreou2011perturb}. %
Thus, a Concrete r.v. can be seen as a \textit{continuous} relaxation of a categorical. 

\subsection{Truncated univariate densities}\label{sec:truncated_2d}

\paragraph{Binary Hard Concrete.} For $K=2$, a point in the simplex can be represented as $\bm{y} = (y, 1-y)$ and the simplex is isomorphic to the unit interval, $\triangle_1 \simeq [0,1]$. For this binary case, \citet{louizos2018learning} proposed a \textit{Hard Concrete distribution} which stretches the Concrete and applies a hard sigmoid transformation (which equals the sparsemax with $K=2$, per \S\ref{sec:transforms}) as a way of placing point masses at $0$ and $1$. 
These ``stretch-and-rectify'' techniques enable assigning probability mass to the boundary of $\triangle_1$
and are similar in spirit to the spike-and-slab feature selection method \citep{mitchell1988bayesian,ishwaran2005spike} and for sparse codes in variational auto-encoders \citep{rolfe2016discrete,vahdat2018dvae++}. 
We propose in \S\ref{sec:extrinsic_intrinsic} a more general 
extension to $K \ge 2$.

\paragraph{Rectified Gaussian.} Rectification can be applied to other continuous distributions.  %
A simple choice is the Gaussian distribution, to which one-sided \citep{hinton1997generative} and two-sided rectifications \citep{palmer2017methods} have been proposed. 
Two-sided rectification yields a mixed r.v.\  in $[0,1]$. %
Writing $\bm{y} = (y, 1-y)$ and $\bm{z} = (z, 1-z)$, this distribution has the following density: 
\begin{equation}\label{eq:gaussian_sparsemax_2d}
p_Y(y) = \mathcal{N}(y; z, \sigma^2)  + \frac{1-\mathrm{erf}(z/(\sqrt{2}\sigma))}{2}\delta_0(y) + \frac{1+\mathrm{erf}((z-1)/(\sqrt{2}\sigma))}{2}\delta_1(y),
\end{equation}
where $\delta_s(y)$ is a Dirac delta density. %
Extending such distributions to the multivariate case is non-trivial. 
For $K>2$, a density expression with Diracs would be cumbersome, since it would require a combinatorial number of Diracs of several ``orders,'' depending on whether they are placed at a vertex, edge, face, etc. 
Another annoyance is that Dirac deltas have $-\infty$ differential entropy, which prevents information-theoretic treatment. 
The next section shows how we can obtain densities that assign mass to the full simplex while avoiding Diracs, by making use of the face lattice and a new base measure.

\section{Face Stratification and Mixed Random Variables}\label{sec:faces_mixed}

\subsection{The Face Lattice}\label{sec:faces}

Let $\mathcal{P}$ be a \textbf{convex polytope} %
whose vertices are bit vectors (\emph{i.e.}, elements of $\{0,1\}^K$). 
Examples are the probability simplex $\triangle_{K-1}$, the hypercube $[0,1]^K$, and marginal polytopes of structured variables \citep{wainwright_2008}. 
The combinatorial structure of a polytope is determined by its {\bf face lattice}  \citep[\S2.2]{ziegler1995lectures}, which we now describe. 
A \textbf{face} of $\mathcal{P}$ is any intersection of $\mathcal{P}$ with a closed halfspace such that none of the interior points of $\mathcal{P}$ lie on the boundary of the halfspace; we denote by $\mathcal{F}(\mathcal{P})$ the set of \textit{all faces} of $\mathcal{P}$ and by $\bar{\mathcal{F}}(\mathcal{P}) := \mathcal{F}(\mathcal{P}) \setminus \{\varnothing\}$ the set of proper faces. We denote by $\mathrm{dim}(f)$ the \textbf{dimension} of a face $f \in \bar{\mathcal{F}}(\mathcal{P})$. 
Thus, 
the vertices of $\mathcal{P}$ are $0$-dimensional faces, and $\mathcal{P}$ itself is a face of the  same dimension as $\mathcal{P}$, called the ``maximal face''. 
Any other face of $\mathcal{P}$  can be regarded as a lower-dimensional polytope. 
The set $\mathcal{F}(\mathcal{P})$ has a partial order induced by set inclusion, that is, it is a partially ordered set (poset), and more specifically  a \textit{lattice}. 
The full polytope $\mathcal{P}$ can be decomposed uniquely as the \textbf{disjoint union} of the relative interior of its faces, which we call \textbf{face stratification}:
$\mathcal{P} = \bigsqcup_{f \in \bar{\mathcal{F}}(\mathcal{P})} \mathrm{ri}(f)$. 
For example, the simplex $\triangle_2$ is composed of its face $\mathrm{ri}(\triangle_2)$ (\textit{i.e.}, excluding the boundary), three edges (excluding the vertices in the corners), and three vertices (the corners). 
This is represented schematically in Figure~\ref{fig:simplex_decomposition}. 
Likewise, the square $[0,1]^2$ is composed of its maximal face $(0,1)^2$, 
four edges (excluding the corners) and four vertices (the corners). 
The partition above %
implies that any subset $A \subseteq \mathcal{P}$ can be represented as a tuple 
$A = (A_f)_{f\in \bar{\mathcal{F}}(\mathcal{P})}$, where $A_f = A \cap \mathrm{ri}(f)$; and the sets $A_f$ are all disjoint. 
\vspace{-1em}
\begin{wrapfigure}[14]{l}{0.52\textwidth}
\begin{center}
\includegraphics[width=0.49\columnwidth]{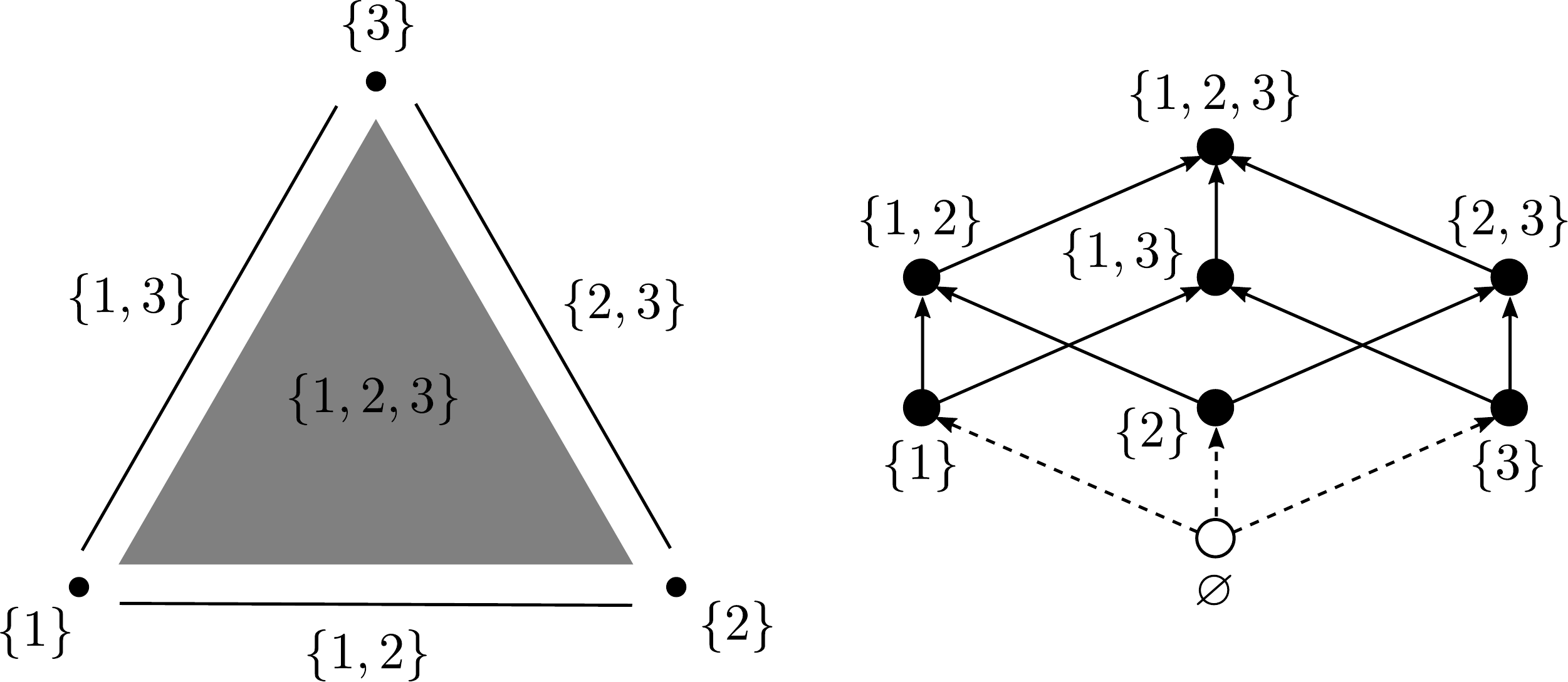}
\caption{Left: Decomposition of a simplex as the disjoint union of the relative interior of its faces. Right: Hasse diagram of the face lattice -- a DAG where points represent faces, and a directed path from a point to another represents face inclusion. \label{fig:simplex_decomposition}}
\label{default}
\end{center}
\end{wrapfigure}

\paragraph{Simplex and hypercube.}
If $\mathcal{P}$ is the simplex  $\triangle_{K-1}$,
each face corresponds to an index set $\mathcal{I} \subseteq [K]$, \textit{i.e.}, it can be expressed as 
$f_\mathcal{I} = \left\{\bm{p} \in \triangle_{K-1} \mid \mathrm{supp}(\bm{p}) \subseteq \mathcal{I}\right\}$,
with dimension $\mathrm{dim}(f_\mathcal{I}) = |\mathcal{I}| - 1$:  
the set of distributions assigning zero probability mass outside $\mathcal{I}$. 
The set $\bar{\mathcal{F}}(\triangle_{K-1})$ has $2^K - 1$ elements. %
Since $\triangle_{1} \simeq  [0,1]$, the \textit{hypercube} $[0,1]^K$ can be regarded as a product of $K$ binary probability simplices. It has $3^{K}$ nonempty faces -- for each dimension we choose between $\{0\}$, $\{1\}$, and $[0,1]$.
We experiment with $\mathcal{P} \in \{\triangle_{K-1}, [0,1]^K\}$ in \S\ref{sec:experiments}.

\subsection{Mixed Random Variables}\label{sec:mixed}

Categorical distributions assign probability only to the vertices of $\triangle_{K-1}$. %
In the opposite extreme, the densities listed in \S\ref{sec:densities} %
assign probability mass to the maximal face only, %
that is, $\mathrm{Pr}\{\bm{y} \in f_\mathcal{I}\} = \int_{f_\mathcal{I}} p_Y(\bm{y}) = 0$ for any $\mathcal{I} \ne [K]$. Any proper density (without Diracs) has this limitation, since non-maximal faces have zero Lebesgue measure in $\mathbb{R}^{K-1}$. 
While for $K=2$ it is possible to circumvent this by defining densities that contain Dirac functions (as in \S\ref{sec:truncated_2d}), this becomes cumbersome for $K>2$. %
Fortunately, there is a more elegant construction that does not require generalized functions. 
The key is to replace the Lebesgue measure by a measure inspired by face stratification. 

\smallskip

\begin{definition}[Direct sum measure]\label{def:direct_sum}
The direct sum measure on a polytope $\mathcal{P}$ %
is
\begin{equation}\label{eq:direct_sum_measure}
\mu^\oplus(A) = \sum_{f \in \bar{\mathcal{F}}(\mathcal{P})} \mu_f(A \cap \mathrm{ri}(f)),
\end{equation}
where $\mu_f$ is the $\mathrm{dim}(f)$-dimensional Lebesgue measure for  $\mathrm{dim}(f)>0$, and the counting measure for $\mathrm{dim}(f) = 0$.
\end{definition}

We show in App.~\ref{sec:measure} that $\mu^{\oplus}$ is a valid measure on 
$\mathcal{P}$ 
under the product $\sigma$-algebra of its faces. 
We can then define probability densities $p_Y^\oplus(\bm{y})$ w.r.t. this base measure and use them to compute probabilities of measurable subsets of $\mathcal{P}$. %
Such distributions can equivalently be defined as follows: \emph{(i)} define a probability mass function $P_F(f)$ on 
$\bar{\mathcal{F}}(\mathcal{P})$, and \emph{(ii)} for each face $f \in \bar{\mathcal{F}}(\mathcal{P})$, %
define a probability density $p_{Y|F}(\bm{y} \mid f)$ over $\mathrm{ri}(f)$.
Random variables with a distribution of this form have a discrete part  and a continuous part, thus we call them \textbf{mixed random variables}.
This is formalized as follows. 

\begin{definition}[Mixed random variable] \label{def:concrete_rv}
A mixed r.v.\ is a r.v.\ $Y$ over a polytope $\mathcal{P}$, including the boundary. Let $F$ be the corresponding discrete r.v.\ over $\bar{\mathcal{F}}(\mathcal{P})$, with $P_F(f) = \mathrm{Pr}\{\bm{y} \in \mathrm{ri}(f)\}$. 
Since the mapping $Y \rightarrow F$ is deterministic, we have $p^\oplus_Y(\bm{y}) = P_F(f) p_{Y\mid F}(\bm{y} \mid f)$ for $\bm{y} \in \mathrm{ri}(f)$. 
The probability of a set $A \subseteq \mathcal{P}$ is given by:
\begin{equation}\label{eq:prob_concrete}
\mathrm{Pr}\{\bm{y} \in A\} = \int_{A} p_Y^\oplus(\bm{y}) \mathrm{d}\mu^{\oplus} = \sum_{f \in \bar{\mathcal{F}}(\mathcal{P})} P_F(f) \int_{A \cap \mathrm{ri}(f)} p_{Y\mid F}(\bm{y} \mid f).
\end{equation}
\end{definition}

Equation \eqref{eq:prob_concrete} may be regarded as a manifestation of the law of total probability mixing discrete and continuous variables. 
Using \eqref{eq:prob_concrete}, we can write expectations over a mixed r.v.\ as 
\begin{equation}
\mathbb{E}_{p_Y^\oplus}[g(Y)] = \mathbb{E}_{P_F}\left[\mathbb{E}_{p_{Y\mid F}}[g(Y)\mid F=f]\right], \quad \text{where $g: \mathcal{P} \rightarrow \mathbb{R}$.}
\end{equation}
Both discrete and continuous distributions are recovered with our definition: If $P_F(f) = 0$ for $\mathrm{dim}(f) > 0$, we have a categorical distribution, which only assigns probability to the 0-faces. 
In the other extreme, if  $P_F(\mathcal{P}) = 1$, %
we have a continuous distribution confined to 
$\mathrm{ri}(\mathcal{P})$. 
That is, \textbf{\concrete random variables include purely discrete and purely continuous r.vs. as particular cases.}
To parametrize distributions of high-dimensional mixed r.vs., it is not efficient to consider all degrees of freedom suggested in Definition~\ref{def:concrete_rv}, since there can be exponentially many faces. %
Instead, we need to derive parametrizations that exploit the lattice structure of the faces. 
We next build upon this idea. %

\subsection{Extrinsic and intrinsic characterizations}\label{sec:extrinsic_intrinsic}

There are two possible characterizations of mixed random variables: %
an \textbf{extrinsic} one, where one starts with a distribution over the ambient space (\textit{e.g.} $\mathbb{R}^K$) and then applies a deterministic, non-invertible, transformation that projects it to $\mathcal{P}$; 
and an \textbf{intrinsic} one, where one specifies a mixture of distributions directly over the faces of $\mathcal{P}$, by specifying $P_F$ and $p_{Y\mid F}$ for each $f \in \bar{\mathcal{F}}(\mathcal{P})$. 
We next provide constructions for both cases: We extend the Hard Concrete and Rectified Gaussian distributions reviewed in \S\ref{sec:truncated_2d}, which are instances of the extrinsic characterization, to $K\ge 2$; and we present a new Mixed Dirichlet distribution which is an instance of the intrinsic characterization. 

\textbf{{\boldmath $K$}-D Hard Concrete.}~
We define the $K$-D Hard Concrete as the following generative story:
\begin{align}
Y' \sim \mathrm{Concrete}(\bm{z}, \beta), \quad 
Y = \mathrm{sparsemax}(\lambda Y'), \quad \text{with $\lambda \ge 1$}. 
\end{align}
When $K=2$, sparsemax becomes a hard sigmoid %
 and we recover the binary Hard Concrete (\S\ref{sec:truncated_2d}). For $K\ge 2$ this is a projection of a ``stretched'' Concrete r.v.\ onto the simplex -- the larger $\lambda$, the higher the tendency of this projection to hit a non-maximal face of the simplex
and induce sparsity.

\textbf{Gaussian-Sparsemax.}~
A similar idea (but without any stretching required) can be used to obtain a sparsemax counterpart of the Logistic-Normal in \S\ref{sec:densities}, which we call ``Gaussian-Sparsemax'':  %
\begin{align}
N \sim \mathcal{N}(\mathbf{0}, \mathbf{I}),\quad
Y = \mathrm{sparsemax}(\bm{z} + \Sigma^{1/2}N). 
\end{align}
Unlike the Logistic-Normal, the Gaussian-Sparsemax can assign nonzero probability mass to the boundary of the simplex. 
When $K=2$, we recover the double-sided rectified Gaussian described in \S\ref{sec:truncated_2d}. 
In that case, %
using Dirac deltas, the density with respect to the Lebesgue measure in $\mathbb{R}$ has the form in \eqref{eq:gaussian_sparsemax_2d}.
With $\theta_0 =  \tfrac{1-\mathrm{erf}(z/(\sqrt{2}\sigma))}{2}$, $\theta_1 = \tfrac{1+\mathrm{erf}((z-1)/(\sqrt{2}\sigma))}{2}$ and $\theta_c = 1- \theta_0 - \theta_1$, 
the same distribution can be expressed \textit{intrinsically} via the density $p_Y^\oplus(y) = P_F(f) p_{Y\mid F}(\bm{y}\mid f)$ as 
\begin{align}\label{eq:gaussian_sparsemax_2d_faces}
P_F(\{0\}) = \theta_0, \, P_F(\{1\}) = \theta_1, \, P_F([0,1]) = \theta_c, \, p_{Y\mid F}(y \mid F=[0,1]) = \nicefrac{\mathcal{N}(y; z, \sigma^2)}{\theta_c}.
\end{align}
For $K>2$, expressions for $P_F$ and $p_{Y\mid F}$ (\textit{i.e.}, an intrinsic representation) are less direct; we express those distributions as a function of the orthant probability of multivariate Gaussians in App.~\ref{sec:stochastic_sparsemax}.

\textbf{Mixed Dirichlet.}~
We now propose an \textit{intrinsic} mixed distribution over $\triangle_{K-1}$, the Mixed Dirichlet, whose generative story is as follows. 
First, a face $F=f_{\mathcal I}$ is sampled with probability
\begin{align}\label{eq:gibbs}
P_F(f_{\mathcal I}; \bm{w}) =  \exp(\bm{w}^\top \bm{\phi}(f_{\mathcal I}) - \log Z(\bm{w}))  , \quad \phi_k(f_{\mathcal I}) = -1^{1-[k \in \mathcal I]} ~,
\end{align}
where $\bm{w} \in \mathbb R^K$ is the natural parameter (a.k.a. \emph{log-potentials}),  $\bm{\phi}(f_{\mathcal I}) \in \{-1,1\}^K$ is the sufficient statistic, and $\log Z(\bm{w})$ is the log-normalizer. %
We then parametrize a Dirichlet distribution over the relative interior of $f_{\mathcal I}$, that is, $Y|F=f_{\mathcal I} \sim \operatorname{Dir}(\bm{\alpha}({\mathcal I}))$, where $\bm{\alpha}({\mathcal I}) \in \mathbb R_{>0}^{|{\mathcal I}|}$. For a compact parametrization, we have a single $K$-dimensional vector $\bm{\alpha}$ of concentration parameters, one parameter per vertex, and $\bm{\alpha}(\mathcal I)$ gathers the coordinates of $\bm{\alpha}$ associated with the vertices in $f$. 
The normalizer of (\ref{eq:gibbs}) can be evaluated in time $\mathcal O(K)$ via the forward algorithm \citep{baum1972inequality} on a directed acyclic graph (DAG) that encodes each non-empty corner $\mathcal I \subseteq [K]$ of the face lattice as a path.
Similarly, we can draw independent samples by stochastic traversals through this DAG. 
The graph needed for this construct and the associated algorithms are detailed in App.~\ref{sec:mixed-dirichlet-sampling}.

\section{Information Theory for Mixed Random Variables}\label{sec:information_theory}

Now that we have the theoretical pillars for mixed random variables, we proceed to defining information theoretic quantities for them: their entropy and Kullback-Leibler divergence.%

\paragraph{Direct sum entropy and KL divergence.}  
The entropy of a r.v.\ $X$ with respect to a measure $\mu$ is:
\begin{equation}\label{eq:entropy}
H^{\mu}(X) = -\textstyle\int_{\mathcal{X}} p_X(x) \log p_X(x) \mathrm{d}\mu(x),
\end{equation}
where $p_X(x)$ is a probability density satisfying $\int_{\mathcal{X}} p_X(x) \mathrm{d}\mu(x) = 1$. 
When $\mathcal{X}$ is finite and $\mu$ is the counting measure, the integral becomes a sum and we recover \textbf{Shannon's discrete entropy}, which is non-negative and upper bounded by $\log |\mathcal{X}|$. %
When $\mathcal{X} \subseteq \mathbb{R}^k$ is continuous and $\mu$ is the Lebesgue measure, we recover the \textbf{differential entropy}, which can be negative and, for compact $\mathcal{X}$, is upper bounded by the logarithm of the volume of $\mathcal{X}$. %
When relaxing a discrete r.v.\ to a continuous one
in a variational model (\textit{e.g.}, using the Concrete distribution),
correct variational lower bounds require switching to differential entropy
\citep{maddison2016concrete} (although discrete entropy is sometimes used \citep{jang2016categorical}). 
This is problematic since the differential entropy is not a limit case of the discrete entropy \citep{cover2012elements}. Our direct sum entropy, defined below, obviates this.  
The key idea is to plug in \eqref{eq:entropy} the direct sum measure \eqref{eq:direct_sum_measure}. 
Since $Y \rightarrow F$ is deterministic, we have $H(Y, F) = H(Y)$. 
This leads to:

\smallskip

\begin{definition}[Direct sum entropy and KL divergence]\label{def:direct_sum_entropy}
The direct sum entropy of a mixed r.v.\ $Y$ is %
\begin{small}
\begin{align}\label{eq:entropy_simplex}
H^{\oplus}(Y) &:=  H(F) + H(Y \mid F) \\
&= \underbrace{-\!\!\sum_{f \in \bar{\mathcal{F}}(\mathcal{P})} \!\! P_F(f) \log P_F(f)}_{\text{discrete entropy}} 
+ \sum_{f \in \bar{\mathcal{F}}(\mathcal{P})} \!\! P_F(f) \underbrace{\left(-\int_{f} p_{Y\mid F}(\bm{y}\mid f) \log p_{Y\mid F}(\bm{y}\mid f)\right)}_{\text{differential entropy}}.\nonumber
\end{align}
\end{small}
The KL divergence between distributions $p_Y^\oplus \equiv (P_F, p_{Y\mid F})$ and $q_Y^\oplus \equiv (Q_F, q_{Y\mid F})$ is:
\begin{small}
\begin{align}\label{eq:kl_simplex}
\KL^\oplus(p_Y^\oplus\|q_Y^\oplus) &:= \KL(P_F\|Q_F) + \mathbb{E}_{f \sim P_F} \left[ \KL(p_{Y\mid F}(\cdot \mid F=f) \| q_{Y\mid F}(\cdot \mid F=f) \right] \\
&= \underbrace{\sum_{f \in \bar{\mathcal{F}}(\mathcal{P})} P_F(f) \log \frac{P_F(f)}{Q_F(f)}}_{\text{discrete KL}} 
+ \sum_{f \in \bar{\mathcal{F}}(\mathcal{P})} P_F(f) \underbrace{\left(\int_{f} p_{Y\mid F}(\bm{y}\mid f) \log \frac{p_{Y\mid F}(\bm{y}\mid f)}{q_{Y\mid F}(\bm{y}\mid f)}\right)}_{\text{continuous KL}}.\nonumber
\end{align}
\end{small}
\end{definition}
As shown in Definition~\ref{def:direct_sum_entropy}, the direct sum entropy and the KL divergence have two components: a \textbf{discrete one over faces} and an \textbf{expectation of a continuous one over each face.} 
The KL divergence is always non-negative and it becomes $+\infty$ if $\mathrm{supp}(P_F) \nsubseteq \mathrm{supp}(Q_F)$ 
or if there is some face where $\mathrm{supp}(p_{Y|F=f}) \nsubseteq \mathrm{supp}(q_{Y|F=f})$.%
\footnote{In particular, this means that \concrete distributions shall not be used as a relaxation in VAEs with purely discrete priors using the ELBO -- rather, the prior should be also \concrete.}
App.~\ref{sec:mutual_information} provides
more information theoretic extensions.
\paragraph{Relation to optimal codes.} 
The direct sum entropy and KL divergence have an interpretation in terms of optimal coding, described in Proposition~\ref{prop:coding} for the case  $\mathcal{P} = \triangle_{K-1}$ and proven in App.~\ref{sec:mutual_information}.
In words, the direct sum entropy is the average length of the optimal code where the sparsity pattern of $\bm{y} \in \triangle_{K-1}$ must be encoded losslessly and where there is a predefined bit precision for the fractional entries of $\bm{y}$. 
On the other hand, the KL divergence between $p_Y^\oplus$ and $q_Y^\oplus$ expresses the additional average code length if we encode variable $Y \sim p_Y^\oplus(\bm{y})$ with a code that is optimal for distribution $q_Y^\oplus(\bm{y})$, and it is independent of the required bit precision.

\begin{proposition}\label{prop:coding}
Let $Y$ be a mixed r.v.\ in $\triangle_{K-1}$. 
In order to encode the face of $Y$ losslessly and to ensure an $N$-bit precise encoding of $Y$ in that face we need the following %
bits on average: %
\begin{equation}\label{eq:coding_entropy}
H^{\oplus}_N(Y) = H^{\oplus}(Y) + N\sum_{k=1}^K (k-1) \sum_{f \in \bar{\mathcal{F}}(\triangle_{K-1}) \atop \mathrm{dim}(f) = k-1} P_F(f).
\end{equation}
\end{proposition}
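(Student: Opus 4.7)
The plan is to split the task of transmitting $Y$ into a discrete part (which face $F$ contains $Y$) and a continuous part (where $Y$ sits inside that face), apply Shannon's source coding theorem to each, and recombine via the law of total expectation. Since $Y \mapsto F$ is deterministic, identifying the face of $Y$ losslessly is equivalent to transmitting the discrete random variable $F \in \bar{\mathcal F}(\triangle_{K-1})$ drawn from $P_F$; Shannon's source coding theorem gives the optimal average code length $H(F) = -\sum_f P_F(f) \log P_F(f)$ bits for this message.

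Conditional on $F = f_{\mathcal I}$ with $d := \dim(f_{\mathcal I}) = |\mathcal I| - 1$, the point $Y$ lies in the $d$-dimensional polytope $\mathrm{ri}(f_{\mathcal I})$, on which $p_{Y\mid F}(\cdot \mid f_{\mathcal I})$ is a density against the intrinsic $d$-dimensional Lebesgue measure used in Definition~\ref{def:direct_sum_entropy}. I would then invoke the standard quantization-of-differential-entropy argument (Cover and Thomas, Theorem~8.3.1): partitioning $\mathrm{ri}(f_{\mathcal I})$ into cells of $d$-volume $2^{-Nd}$, so that each free coordinate is stored to $N$-bit precision, yields a discrete random variable $Y^\Delta \mid F = f_{\mathcal I}$ whose Shannon entropy equals $H(Y \mid F = f_{\mathcal I}) + N d$ (in the limit, or up to the usual $o(1)$ slack). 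By Shannon source coding once more, this is the average number of bits to transmit $Y$ given $F$. For vertices ($d = 0$) no continuous message is needed and the continuous contribution vanishes, which is consistent with the factor $(k-1)$ evaluating to $0$ when $k=1$.

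Taking expectation over $F$ combines the two stages,
\begin{align*}
H^{\oplus}_N(Y) \;=\; H(F) + \mathbb E_{F}\!\left[H(Y\mid F) + N\,\dim(F)\right] \;=\; H^{\oplus}(Y) + N\,\mathbb E_F[\dim(F)],
\end{align*}
and regrouping faces by their dimension ($\dim(f) = k-1$ iff $|\mathcal I(f)| = k$) rewrites $\mathbb E_F[\dim(F)]$ as the double sum $\sum_{k=1}^{K}(k-1)\sum_{f:\,\dim(f)=k-1} P_F(f)$ appearing in~(\ref{eq:coding_entropy}).

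The main obstacle I anticipate is making the quantization step rigorous on faces that are not axis-aligned in the ambient $\mathbb R^K$: one must verify that the definition of differential entropy on $\mathrm{ri}(f_{\mathcal I})$ used in Definition~\ref{def:direct_sum_entropy} and the quantization volume $2^{-Nd}$ refer to the \emph{same} intrinsic $d$-dimensional measure, so that any Jacobian factor introduced by working in a Euclidean chart (e.g.\ by dropping one coordinate in $\mathcal I$) cancels between the density and the cell volume and leaves no residue in the bit count. Once this change-of-variables bookkeeping is pinned down, and the Shannon coding bounds are invoked in their usual $\pm 1$-bit (or asymptotic) form, the rest of the argument is the two-line decomposition above.
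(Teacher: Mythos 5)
Your proposal is correct and follows essentially the same route as the paper's own proof in App.~\ref{sec:optimal_code}: Shannon's source coding theorem for the discrete face variable $F$, the quantization-of-differential-entropy result (\citet[Theorem~9.3.1]{cover2012elements}, extended to $d$ dimensions) for $Y\mid F$, and the tower rule to combine them. The measure-consistency caveat you raise about non-axis-aligned faces is a fair point that the paper's (admittedly informal) proof does not address explicitly, but it does not change the argument since Definition~\ref{def:direct_sum_entropy} already fixes the intrinsic $\mathrm{dim}(f)$-dimensional Lebesgue measure as the reference measure on each face.
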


\paragraph{Entropy of Gaussian-Sparsemax.} 
Revisiting the Gaussian-Sparsemax with $K=2$ (\S\ref{sec:extrinsic_intrinsic}) and using the intrinsic representation \eqref{eq:gaussian_sparsemax_2d_faces}, 
the direct sum entropy becomes
    \begin{align}
    H(Y) = H(F) + H(Y|F) = %
    -P_0 \log P_0 - P_1 \log P_1 - \int_{0}^{1} \!\!\mathcal{N}(y; z, \sigma^2) \log \mathcal{N}(y; z, \sigma^2)  \mathrm{d}y.
    \label{eq:gaussian-sparsemax-entropy}
    \end{align}
This leads to simple expressions for the entropy and KL divergence, detailed in App.~\ref{sec:entropy_gaussian_sparsemax}. We use these expressions in our experiments with mixed bit-vector VAEs in \S\ref{sec:experiments}.

\paragraph{Entropy of Mixed Dirichlet.}
The intrinsic representation of the Mixed Dirichlet (\S\ref{sec:extrinsic_intrinsic}) allows for $\mathcal{O}(K)$ computation of $H(F)$ and $\KL(P_F || Q_F)$ via dynamic programming, necessary for $H^\oplus(Y)$ and $\KL^\oplus(p^\oplus_Y || q^\oplus_Y)$ (see App.~\ref{sec:mixed-dirichlet-sampling} for details). 
The continuous parts $H(Y|F)$ and $\mathbb E_{P_F}[\KL(p_{Y|F=f} || q_{Y|F=f})]$ require computing an expectation with an exponential number of terms (one per proper face) and can be approximated with an MC estimate by sampling from $P_F$ and assessing closed-form the differential entropy of Dirichlet distributions over the sampled faces.

\smallskip\noindent\textbf{Maximum entropy density in the full simplex.} 
An important question is to characterize maximum entropy mixed distributions. %
If we consider only continuous distributions, confined to the maximal face  $\mathrm{ri}(\triangle_{K-1})$, the answer is the flat distribution, with entropy $-\log\big((K-1)!\big)$, corresponding to a deterministic $F$ which puts all probability mass in this maximal face.  But constraining ourselves to a single face is quite a limitation, and in particular \emph{knowing} this constraint provides valuable information that intuitively should reduce entropy.%
\footnote{At the opposite extreme, if we only assign probability to pure vertices, \textit{i.e.}, if we are constrained to \emph{minimal} faces, the maximal discrete entropy is $\log K$. We will see that looking at \emph{all} faces further increases entropy.} %
What if we consider densities that assign probability to the boundaries? 
This is answered by the next proposition, proved in App.~\ref{sec:maxent}.

\smallskip

\begin{proposition}[Maxent mixed distribution on simplex]
Let $Y$ be a mixed r.v.\ on $\triangle_{K-1}$ with $P_F(f) \propto \frac{2^{N(\mathrm{dim}(f))}}{\mathrm{dim}(f)!}$ and $p_{Y\mid F}(\bm{y} \mid f)$ uniform for each $f \in \bar{\mathcal{F}}(\triangle_{K-1})$. 
Then, $Y$ has maximal direct sum entropy $H^\oplus_{N, \mathrm{max}}(Y)$. 
The value of the entropy is:
\begin{equation}
H^\oplus_{N, \mathrm{max}}(Y) = \log {\sum_{k=1}^K \frac{{K \choose k}2^{N(k-1)}}{(k-1)!}} = \log L_{K-1}^{(1)}(-2^N).
\end{equation}
where $L_{n}^{(\alpha)}(x)$ denotes the generalized Laguerre polynomial \citep{sonine1880recherches}. 
\label{prop:maxent_mixed_distribution}
\end{proposition}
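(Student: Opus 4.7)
The plan is to split the variational problem for the density $p_Y^\oplus \equiv (P_F, p_{Y\mid F})$ into an inner maximization over the continuous part $p_{Y\mid F}$, done separately on each face, and an outer maximization over the face distribution $P_F$. Using Proposition~\ref{prop:coding} and Definition~\ref{def:direct_sum_entropy}, I rewrite the objective as
\begin{align*}
H^\oplus_N(Y) = H(F) + \sum_{f \in \bar{\mathcal{F}}(\triangle_{K-1})} P_F(f)\Big[H(Y \mid F=f) + N\,\mathrm{dim}(f)\Big].
\end{align*}
The second term is linear in $P_F$ and decouples across faces in its dependence on $p_{Y\mid F}$, so the two maximizations can be carried out in sequence without loss of generality. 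Throughout I will work in base-$2$ logarithms so that the additive ``$N\,\mathrm{dim}(f)$'' bit-precision term combines cleanly with the entropy.

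For the inner step, fix any face $f$ of dimension $k-1$. Since $p_{Y\mid F}(\,\cdot\mid f)$ is a density on $\mathrm{ri}(f)$ with respect to the $(k-1)$-dimensional Lebesgue measure (per Definition~\ref{def:direct_sum}), the classical maxent result says the maximizer is the uniform density on $\mathrm{ri}(f)$ and the resulting entropy equals $\log V(f)$. The key geometric computation is that $f = f_{\mathcal{I}}$ with $|\mathcal{I}|=k$ is just a relabeled copy of $\triangle_{k-1}$, whose $(k-1)$-volume (under the standard parametrization by $k-1$ coordinates) is $1/(k-1)!$. Hence $\max_{p_{Y\mid F}(\cdot\mid f)} H(Y\mid F=f) = -\log((k-1)!)$.

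For the outer step, substituting back gives
\begin{align*}
\max_{p_{Y\mid F}} H^\oplus_N(Y) = H(F) + \sum_{f} P_F(f)\,a_f, \qquad a_f := N\,\mathrm{dim}(f) - \log(\mathrm{dim}(f)!).
\end{align*}
This is the standard Gibbs variational problem: the maximizer under the constraint $\sum_f P_F(f)=1$ is the Boltzmann distribution $P_F(f) \propto 2^{a_f}$, which in this case is exactly the claimed $P_F(f) \propto 2^{N\,\mathrm{dim}(f)}/\mathrm{dim}(f)!$, and the optimal value equals $\log Z$ with partition function $Z=\sum_f 2^{a_f}$. Grouping faces by dimension (there are $\binom{K}{k}$ faces of dimension $k-1$) yields
\begin{align*}
Z \;=\; \sum_{k=1}^{K} \binom{K}{k}\,\frac{2^{N(k-1)}}{(k-1)!}.
\end{align*}

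To finish, I recognize $Z$ as a value of a generalized Laguerre polynomial. Using the series $L_n^{(\alpha)}(x) = \sum_{i=0}^n \binom{n+\alpha}{n-i}(-x)^i/i!$ with $n=K-1$, $\alpha=1$, $x=-2^N$, and re-indexing by $k=i+1$ together with $\binom{K}{K-i-1}=\binom{K}{i+1}$, the sum above equals $L_{K-1}^{(1)}(-2^N)$. Taking logarithms of $Z$ gives the stated formula. The main obstacle is really just bookkeeping: ensuring that the Lebesgue normalization on each face matches the direct sum measure from Definition~\ref{def:direct_sum} (so that the face volume is $1/(k-1)!$, not $\sqrt{k}/(k-1)!$), and using base-$2$ logs consistently so that the exponent of $2$ (not $e$) appears in the Boltzmann form — once these are fixed, the computation is mechanical.
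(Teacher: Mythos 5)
Your proof is correct and follows essentially the same route as the paper's: maximize the differential entropy on each face by the uniform density (value $-\log((k-1)!)$ from the face volume $1/(k-1)!$), then solve the resulting entropy-plus-linear problem for $P_F$ via the Gibbs/softmax variational identity, and finally recognize the log-partition function as $\log L_{K-1}^{(1)}(-2^N)$. The only cosmetic difference is that the paper first collapses faces of equal dimension by symmetry into a distribution $g(k)$ over $[K]$ before applying the variational step, whereas you optimize over all $2^K-1$ faces directly and group by dimension at the end; the two computations are identical.
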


For example, for $K=2$, the maximal entropy distribution is $P_F(\{0\}) = P_F(\{1\}) = \sfrac{1}{(2 + 2^N)}$, $P_F([0,1]) = \sfrac{2^N}{(2 + 2^N)}$, with $H^\oplus_{N, \mathrm{max}}(Y) = \log(2 + 2^N)$.  Therefore, in the worst case, we need $\log_2 (2 + 2^N)$ bits on average to encode $Y \in \triangle_1$ with bit precision $N$. For $K=3$, $H^\oplus_{N, \mathrm{max}}(Y) =\log(3 + 3\cdot 2^N + 2^{2N-1})$. 
A plot of the entropy as a function of $K$ is shown in Figure~\ref{fig:maxent}, App.~\ref{sec:maxent}. %

\section{Experiments}\label{sec:experiments}

We report experiments in three representation learning tasks:\footnote{Appendix \ref{App:GLM} contains a fourth experiment---regression towards voting proportions---where we use our Mixed Dirichlet as a likelihood function in a generalized linear model.} 
 an \textbf{emergent communication} game, where we assess the ability of mixed latent variables over $\triangle_{K-1}$ to induce sparse communication between two agents; 
a \textbf{bit-vector VAE} modeling Fashion-MNIST images \citep{fmnist}, where we compare several mixed distributions over $[0,1]^K$ and study the impact of the direct sum entropy (\S\ref{sec:information_theory}); 
and a \textbf{mixed-latent VAE}, where we experiment with the Mixed Dirichlet over $\triangle_{K-1}$ to model MNIST images \citep{lecun2010mnist}. Full details about each task and the experimental setup are reported in App.~\ref{sec:experimental_details}. 
Throughout, we report our three mixed distributions described in \S\ref{sec:extrinsic_intrinsic} alongside the following distributions: Concrete (\citet{maddison2016concrete}; a purely continuous density), 
Gumbel-Softmax (\citet{jang2016categorical}; like above, but using a discrete KL term in a VAE,
which introduces inconsistency),
Gumbel-Softmax ST (\citet{jang2016categorical}; categorical latent variable, but using Concrete straight-through gradients), and Dirichlet and Gaussian, when applicable. 

\noindent\textbf{Emergent communication.} 
This is a cooperative game between two agents: a \textit{sender} sees an image and emits a single-symbol message from a fixed vocabulary $[K]$; a \textit{receiver} reads the symbol and tries to identify the correct image out of a set. We follow the architecture in \cite{lazaridou2020emergent,havrylov2017emergence}, choosing a vocabulary size of 256 and 16 candidate images. For the mixed variable models, the message is a ``mixed symbol'', \textit{i.e.}, a (sparse) point in $\triangle_{K-1}$. 
Table~\ref{tab:comm-success} reports communication success (accuracy of the receiver), along with the average number of nonzero entries in the samples at test time. Gaussian-Sparsemax learns to become very sparse, attaining the best overall communication success, exhibiting in general a better trade-off than the $K$-D Hard Concrete. 

\begin{table}[t] %
    \caption{Test results. \textbf{Left}: Emergent communication success average and standard error over 10 runs. Random guess baseline: $6.25\%$.
\textbf{Right}: Fashion-MNIST bit-vector VAE NLL (bits/dim, lower is better). Entropy column legend: \underline{c}ontinuous/\underline{d}iscrete/mi\underline{x}ed, $\approx$: estimated, $=$: exact.
}
    \label{tab:comm-success}
    \small
    \begin{tabular}{c@{\quad}c}
    \begin{tabular}[t]{l@{~~}c@{~~}c}
        \toprule
        Method & Success (\%) 
& Nonzeros $\downarrow$ 
\\
        \midrule
        Gumbel-Softmax & $78.84$ \textcolor{violet}{\scriptsize$\pm 8.07$}  & 256\\
        Gumbel-Softmax ST & $49.96$ \textcolor{violet}{\scriptsize$\pm 9.51$} & 1  \\
        \midrule
        $K$-D Hard Concrete & $76.07$ \textcolor{violet}{\scriptsize$\pm 7.76$} & 21.43 \textcolor{violet}{\scriptsize$\pm 17.56$}  \\ %
        Gaussian-Sparsemax & \textbf{80.88} \textcolor{violet}{\scriptsize$\pm 0.50$} & 1.57 \textcolor{violet}{\scriptsize$\pm 0.02$} \\ %
        
        \bottomrule
    \end{tabular}
&
    \begin{tabular}[t]{l@{~}c@{~~}c@{~~}c}
        \toprule
        Method & Entropy & NLL  & Sparsity (\%) $\uparrow$ \\
        \midrule
        Binary Concrete & C $\approx$ & 3.60 & 0 \\
        Gumbel-Softmax & D $=$ & \textbf{3.49} & 0 \\
        Gumbel-Softmax ST & D $=$& 3.57 &  100 \\
        \midrule
        Hard Concrete & X $\approx$ & 3.57 & 45.64 \\
        Gaussian-Sparsemax & X $\approx$ &  3.53 & 82.82 \\
        Gaussian-Sparsemax & X $=$ & \textbf{3.49} & 73.83 \\
        \bottomrule
    \end{tabular}
    \\
\end{tabular}
\vspace{-.3cm}  %
\end{table}

\noindent\textbf{Bit-Vector VAE.} 
We model Fashion-MNIST
following \cite{correia2020efficient},
with 128 binary latent bits, maximizing the ELBO.
Each latent bit has %
a uniform prior for purely discrete and continuous models,
and the maxent prior (Prop.~\ref{prop:maxent_mixed_distribution}), which assigns probability $\sfrac{1}{3}$ to each face of $\triangle_1$, in the mixed case. 
For Hard Concrete \citep{louizos2018learning} and Gaussian-Sparsemax, we use mixed latent variables and our direct sum entropy, yielding a coherent objective and unbiased gradients.
For the Hard Concrete, we estimate $H(Y \mid X=x)$ via MC; for Gaussian-Sparsemax, we consider both cases: using an MC estimate and computing the entropy exactly using the expression derived in \S\ref{sec:extrinsic_intrinsic}. 
In Table \ref{tab:comm-success} (right) we report an importance sampling estimate (1024 samples) of negative log-likelihood (NLL) on test data, normalized by the number of pixels. %
For Gaussian-Sparsemax, exact entropy computation
improves performance, matching Gumbel-Softmax while being sparse.

\begin{wraptable}[11]{r}{.45\textwidth}
\vspace{-1.5em} %
\caption{MNIST test results (avg. of 5 runs).
Categorical is marginalized exactly. \label{tab:mnist}}
\smallskip
    \centering
    \small
    \begin{tabular}{lr@{~~}r@{~~}r}
        \toprule
        Method & D & R & NLL$\downarrow$ \\ \midrule
        Gaussian
        \textcolor{black}{($\mathbb{R}^{10}$)}
        & 76.67 & 19.94 &  91.12 \\  %
        Dirichlet & 78.62 & 19.94   & 93.81\\        %
        Categorical
        & 164.72  & 2.28  & 166.95 \\ %
    Gumbel-Softmax ST & 171.76 & 1.70  & 168.50 \\ %
        \midrule 
        Mixed Dirichlet & 90.34  & 19.39 & 106.59 \\ %
        \bottomrule
    \end{tabular}
\end{wraptable}
\noindent\textbf{Mixed-Latent VAE on MNIST.}
We model the binarized MNIST dataset using a mixed-latent VAE, with $\bm{y} \in \Delta_{10-1}$,  a maximum-entropy prior on $Y$, and a factorized decoder. 
We use a feed-forward encoder with a single-hidden-layer to predict the variational distributions (\emph{e.g.}, log-potentials $\bm{w}$ and concentrations $\bm{\alpha}$, for Mixed Dirichlet). 
For Mixed Dirichlet, gradients are estimated via a combination of  implicit reparametrization  \citep{FigurnovEtAl2018Implicit} and the score function estimator \citep{mnih2014neural}, see App.~\ref{sec:mixed-dirichlet-sampling} for details. 
We report a single-sample MC estimate of distortion (D) and rate (R) as well as a 1000-samples importance sampling estimate of NLL. The Gaussian prior has access to all of the $\mathbb R^{10}$, whereas all other priors are constrained to the simplex $\Delta_{10-1}$, namely, its relative interior (Dirichlet), its vertices (Gumbel-Softmax ST, and Categorical), or all of it (Mixed Dirichlet). %
The Dirichlet model clusters digits as well as a Gaussian VAE does, while 
purely discrete models struggle to discover structure. %
App.~\ref{sec:experimental_details} lists qualitative evidence. Compared to a Dirichlet, the Mixed Dirichlet makes some plausible confusions (Fig. \ref{fig:tsne}, left), but, crucially, it solves part of the problem by allocating digits to specific faces (Fig. \ref{fig:tsne}, right). 
Even though we employed a sampled self-critic, Mixed Dirichlet seems to suffer from variance of SFE, which may explain why Mixed Dirichlet under-performs in terms of NLL.

\begin{wrapfigure}[15]{r}{0.50\textwidth}
\vspace{-1.2em}
\centering
\includegraphics[width=0.5\linewidth]{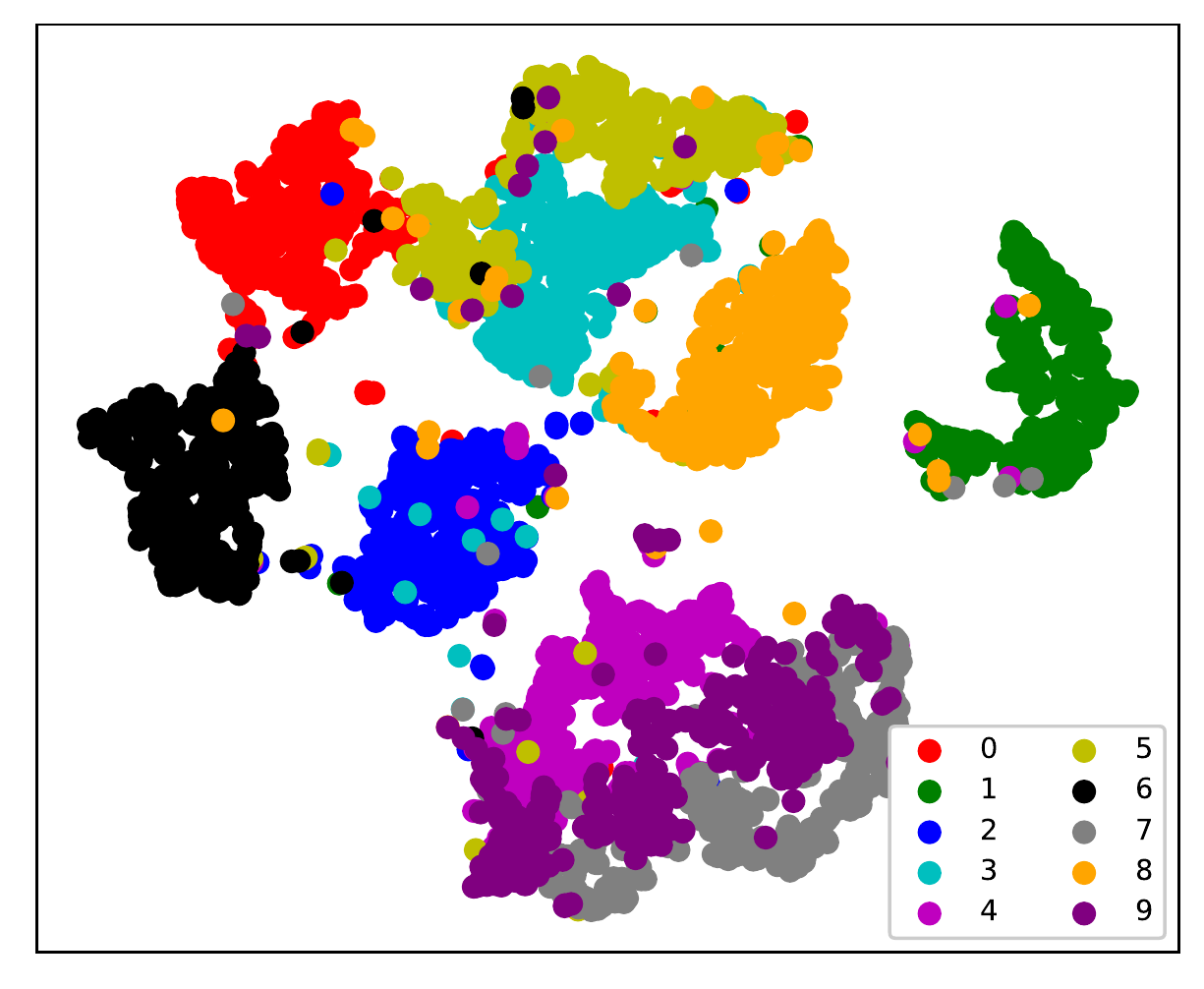}%
\includegraphics[width=0.5\linewidth]{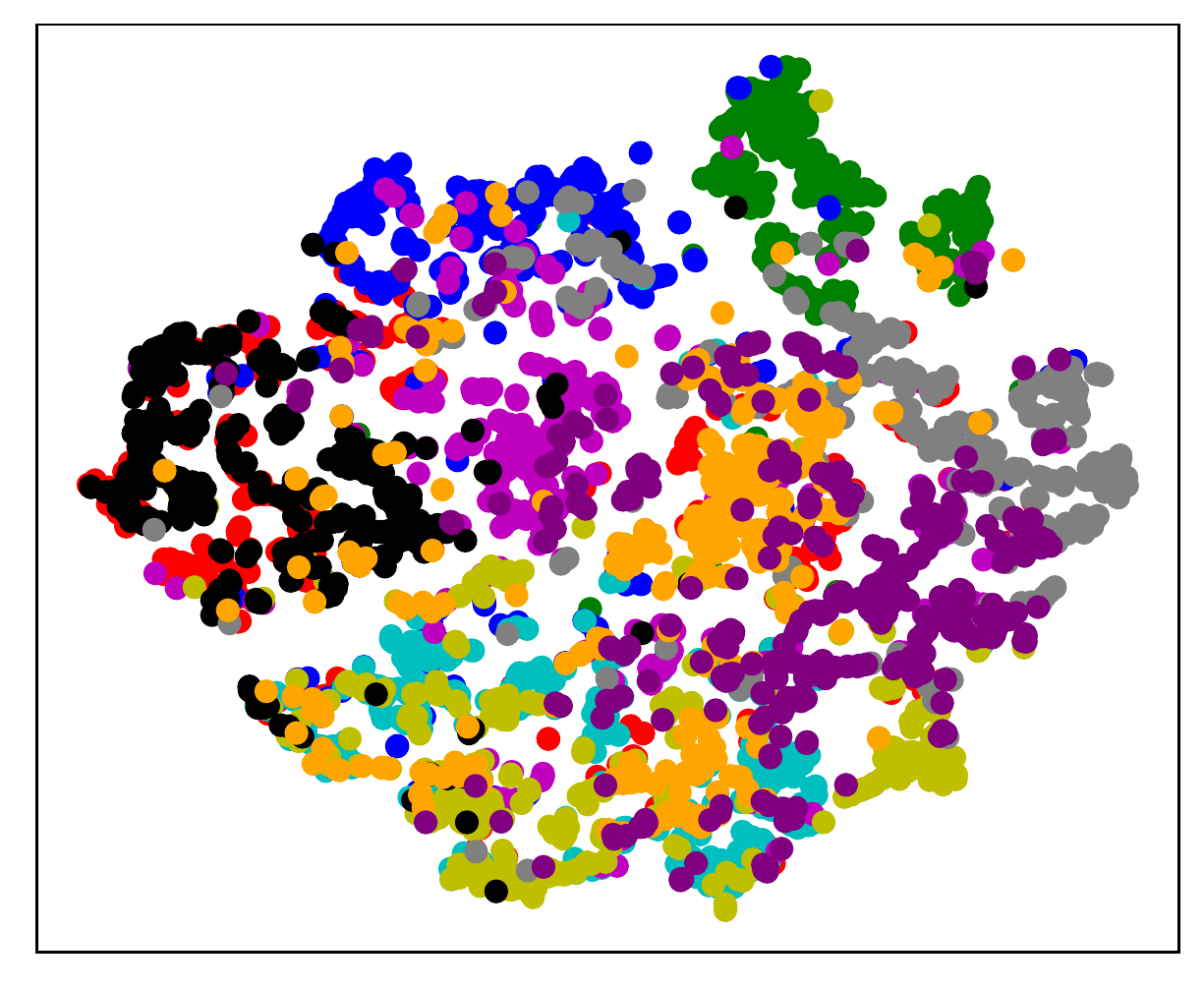} %
    \vspace*{-1.95em}
    \caption{tSNE plots of: posterior samples $y \in \triangle_{K-1}$ (left), predicted log-potentials $\bm{w} \in \mathbb R^K$ (right). 
    Colors encode digit label (not available to models). 
    Clusters are formed in latent space and in how digits are assigned to faces (recall that $\bm{w}$ parameterizes a Gibbs distribution over $\bar{\mathcal{F}}(\triangle_{K-1})$).
    }
    \label{fig:tsne}
\end{wrapfigure}

\section{Related Work}

Existing strategies to learn discrete latent variable models include the score function estimator  \citep{williams1992simple,mnih2014neural} and pathwise gradients combined with a Concrete relaxation \citep{maddison2016concrete, jang2016categorical}. %
The latter is often used with straight-through gradients and by combining a continuous latent variable with a discrete entropy, which is theoretically not sound. 
Besides the Concrete, continuous versions of the Bernoulli and Categorical have also been proposed \citep{Loaiza-Ganem19CBern,pmlr-v119-gordon-rodriguez20a}, but they are also purely continuous densities, assigning zero probability mass to the boundary of the simplex. 

Our approach is inspired by discrete-continuous hybrids based on truncation and rectification \citep{hinton1997generative,palmer2017methods,louizos2018learning}, which have been proposed for univariate distributions. We generalize this idea to arbitrary dimensions replacing truncation by sparse projections to the simplex. %
The direct sum measure and our proposed intrinsic sampling strategies are related to the concept of ``manifold stratification'' proposed in the statistical physics literature \citep{holmes2020simulating}. Other mixed r.vs. have also been recently considered (for a few special cases) in \citep{bastings2019interpretable,murady2020probabilistic,vanderwel2020improving}.
For instance, 
\cite{Burkhardt19SparseLDA} induce sparsity in hierarchical models
by deterministically masking the concentration parameters of a Dirichlet distribution
yielding a special case of Mixed Dirichlet with degenerate $P_F$,
trained with the straight-through estimator. 
Our paper generalizes these attempts,
providing 
solid theoretical support 
for manipulating mixed distributions in higher dimensions.

Discrete and continuous representations in the context of emergent communication have been discussed in \cite{foerster2016learning,lazaridou2020emergent}. 
Discrete communication is computationally more challenging because it prevents direct gradient backpropagation \citep{foerster2016learning,havrylov2017emergence}, but it is hypothesized that this ``discrete bottleneck'' %
forces the emergence of symbolic protocols. 
Our mixed variables bring a new perspective into this problem, leading to \textit{sparse communication}, which lies in between discrete and continuous communication and supports gradient backpropagation without the need for straight-through gradients.

\section{Conclusions}\label{sec:conclusions}

We presented a mathematical framework for handling mixed random variables, which are %
discrete/continuous hybrids.
Key to our framework is the use of a direct sum measure as an alternative to the Lebesgue-Borel and the counting measures, which considers all faces of the simplex. 
We developed generalizations of information theoretic concepts for mixed symbols, and we experimented on emergent communication and on variational modeling of MNIST and Fashion-MNIST images. 

We believe the framework described here is only scratching the surface. For example, the Mixed Dirichlet is just one example of an intrinsic mixed distribution; more effective intrinsic parametrizations may exist and are a promising avenue.
While our main focus was on the probability simplex and hypercube, mixed \textit{structured} variables are another promising direction, enabled by our theoretical characterization of direct sum measures, which can be defined for any polytope via their face lattice  \citep{ziegler1995lectures,grunbaum2003convex}. 
Current methods for structured variables include perturbations (continuous,
\cite{corro2018differentiable,berthet2020learning,paulus2020gradient}),
and sparsemax extensions (discrete,
\cite{niculae2018sparsemap,correia2020efficient}),
lacking tractable densities.

\paragraph{Ethics statement.}
We highlight indirect impact of our work through
applications such as generation and explainable AI, where improved performance must be carefully scrutinized.
Our proof-of-concept emergent communication, like previous work, uses ImageNet, whose construction exhibits societal biases, including racism and sexism
\citep{excavatingai} that communicating agents may learn.
Even in unsupervised settings, biases may be learned by communicating agents. 

\paragraph{Reproducibility statement.}
We now discuss the efforts that have been made to ensure reproducibility of our work.
We state the full set of assumptions of our theoretical results and include complete proofs in App.~\ref{sec:measure}, App.~\ref{sec:stochastic_sparsemax}, App.~\ref{sec:mixed-dirichlet-sampling}, App.~\ref{sec:mutual_information}, App.~\ref{sec:entropy_gaussian_sparsemax}, and App.~\ref{sec:maxent}.
Additionally, code and instructions to reproduce our experiments are available at \url{https://github.com/deep-spin/sparse-communication}. We report the standard error over 10 runs for the emergent communication experiment due to the high variance of results across seeds.
We include the type of computing resources used in our experiments in App.~\ref{App:computing_infrastructure}.

\section*{Acknowledgments}
We would like to thank Mário Figueiredo, Gonçalo Correia, and the DeepSPIN team for helpful discussions, Tim Vieira, who answered several questions about order statistics, Sam Power, who pointed out to
manifold stratification, and Juan Bello-Rivas, who suggested the name ``mixed random
variables.'' 
This work was built on open-source software; we acknowledge \citet{python, numpy, scipy, nparray, scikit-learn}, and \cite{pytorch}.
AF and AM are supported by the P2020 program MAIA (LISBOA-01-0247-
FEDER-045909), the European Research Council (ERC StG DeepSPIN 758969), and
by the Fundação para a Ciência e Tecnologia through contract UIDB/50008/2020. 
WA received funding from the European Union's Horizon 2020 research and innovation programme under grant agreement No 825299 (GoURMET).
VN is partially supported by the Hybrid Intelligence Centre, a 10-year programme funded by the Dutch Ministry of Education, Culture and Science through the Netherlands Organisation for Scientific Research (\url{https://hybrid-intelligence-centre.nl}).

\bibliography{iclr2022_conference}
\bibliographystyle{iclr2022_conference}

\newpage
\appendix

\section{Proof of Well-Definedness of Direct Sum Measure}\label{sec:measure}

We start by recalling the definitions of $\sigma$-algebras, measures, and measure spaces. 
A \textbf{$\sigma$-algebra} on a set $X$ is a collection of subsets, $\Omega \subseteq 2^{X}$, which is closed under complements and under countable unions. 
A \textbf{measure} $\mu$ on $(X, \Omega)$ is a function from $\Omega$ to $\mathbb{R} \cup \{\pm \infty\}$ satisfying (i) $\mu(A) \ge 0$ for all $A \in \Omega$, (ii) $\mu(\varnothing) = 0$, and (iii) the $\sigma$-additivity property: $\mu(\sqcup_{j\in\mathbb{N}} A_j) = \sum_{j\in\mathbb{N}} \mu(A_j)$ for every countable collections $\{A_j\}_{j\in\mathbb{N}} \subseteq \Omega$ of pairwise disjoint sets in $\Omega$. 
A \textbf{measure space} is a triple $(X, \Omega, \mu)$ where $X$ is a set, $\mathcal{A}$ is a $\sigma$-algebra on $X$ and $\mu$ is a measure on $(X, \mathcal{A})$. 
An example is the Euclidean space $X=\mathbb{R}^K$ endowed with the Lebesgue measure, where $\Omega$ is the Borel algebra generated by the open sets (\textit{i.e.} the set $\Omega$ which contains these open sets and countably many Boolean operations over them). 

The well-definedness of the direct sum measure $\mu^\oplus$ comes from the following more general result, which appears (without proof) as exercise I.6 in \citet{conway2019course}.
\begin{lemma}\label{lemma:direct_sum_measure}
Let $(X_k, \Omega_k, \mu_k)$ be measure spaces for $k=1, \ldots, K$. 
Then, $(X, \Omega, \mu)$ is also a measure space, with $X = \bigoplus_{k=1}^K X_k = \prod_{k=1}^K X_k$ (the direct sum or Cartesian product of sets $X_k$), $\Omega = \{A \subseteq X \mid A 
\cap X_k \in \Omega_k,\,\, \forall k \in [K]\}$, and $\mu(A) = \sum_{k=1}^K \mu_k(A \cap X_k)$.
\end{lemma}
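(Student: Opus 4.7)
The plan is to verify separately that $\Omega$ is a $\sigma$-algebra on $X$ and that $\mu$ is a measure on $(X,\Omega)$, with both checks reducing coordinate-wise to the corresponding properties of each $(X_k,\Omega_k,\mu_k)$. Since the ``direct sum'' here is really a disjoint union of the $X_k$ inside $X$, the set $A\cap X_k$ unambiguously picks out the ``part'' of $A$ living in the $k$-th component, and the axioms of $\Omega$ and $\mu$ apply componentwise.

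For the $\sigma$-algebra axioms: $\varnothing\in\Omega$ because $\varnothing\cap X_k=\varnothing\in\Omega_k$ for every $k$. If $A\in\Omega$, then $A^c\cap X_k=X_k\setminus(A\cap X_k)$ is the complement of an element of $\Omega_k$ inside $X_k$, hence belongs to $\Omega_k$, so $A^c\in\Omega$. If $\{A_j\}_{j\in\mathbb{N}}\subseteq\Omega$, then $\bigl(\bigcup_j A_j\bigr)\cap X_k=\bigcup_j(A_j\cap X_k)\in\Omega_k$ by closure of $\Omega_k$ under countable unions, so $\bigcup_j A_j\in\Omega$.

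For the measure axioms: non-negativity and $\mu(\varnothing)=\sum_k\mu_k(\varnothing)=0$ are immediate. The only nontrivial step is $\sigma$-additivity. Given a countable pairwise-disjoint family $\{A_j\}_{j\in\mathbb{N}}\subseteq\Omega$, the traces $\{A_j\cap X_k\}_{j\in\mathbb{N}}$ are pairwise disjoint in $\Omega_k$ for each fixed $k$, so
\begin{equation}
\mu\Bigl(\bigsqcup_{j}A_j\Bigr)=\sum_{k=1}^{K}\mu_k\Bigl(\bigsqcup_{j}(A_j\cap X_k)\Bigr)=\sum_{k=1}^{K}\sum_{j\in\mathbb{N}}\mu_k(A_j\cap X_k)=\sum_{j\in\mathbb{N}}\mu(A_j),
\end{equation}
where the first equality uses the definition of $\mu$, the second uses $\sigma$-additivity of each $\mu_k$, and the third swaps the (finite) sum over $k$ with the countable sum over $j$ — a trivial rearrangement since all summands are non-negative and $k$ ranges over a finite set (no Tonelli-style hypothesis is needed beyond non-negativity).

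I expect no real obstacle: the argument is a routine bookkeeping verification. The only conceptual point worth stressing is the role of ``direct sum'' (coproduct) rather than Cartesian product: we need the $X_k$ to be genuinely disjoint so that $A\mapsto(A\cap X_k)_k$ is a partition of $A$ and the defining sum does not double-count. This is automatic in the intended application to face stratification, where the relative interiors $\{\mathrm{ri}(f):f\in\bar{\mathcal{F}}(\mathcal{P})\}$ are pairwise disjoint by construction (\S\ref{sec:faces}); taking $\mu_k$ to be either the appropriate-dimensional Lebesgue measure or the counting measure for $0$-dimensional faces, Lemma~\ref{lemma:direct_sum_measure} then immediately yields that $\mu^{\oplus}$ of Definition~\ref{def:direct_sum} is a bona fide measure on $\mathcal{P}$.
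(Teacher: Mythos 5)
Your proof is correct and follows essentially the same route as the paper's: verify the $\sigma$-algebra axioms componentwise via $A^c\cap X_k = X_k\setminus(A\cap X_k)$ and $(\bigcup_j A_j)\cap X_k=\bigcup_j(A_j\cap X_k)$, then establish $\sigma$-additivity of $\mu$ by swapping the finite sum over $k$ with the countable sum over $j$. Your added remark that the ``direct sum'' must be read as a disjoint union (so that $A\mapsto(A\cap X_k)_k$ partitions $A$ and the sum does not double-count) is a worthwhile clarification of a point the paper's statement glosses over, but it does not change the argument.
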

\begin{proof}
First, we show that $\Omega$ is a $\sigma$-algebra. We need to show that (i) if $A \in \Omega$, then $\bar{A} \in \Omega$, and (ii) if $A_i \in \Omega$ for each $i \in \mathbb{N}$ then $\bigcup_{i \in \mathbb{N}} A_i \in \Omega$. 
For (i), we have that, if $A \in \Omega$, then we must have $A \cap {X}_k \in \Omega_k$ for every $k$, and therefore $\bar{A} \cap {X}_k = {X}_k \setminus A = {X}_k \setminus (A \cap {X}_k) \in \Omega_k$, since $\Omega_k$ is a $\sigma$-algebra on ${X}_k$. This implies that $\bar{A} \in \Omega$. 
For (ii), we have that, if $A_i \in \Omega$, then we must have $A_i \cap {X}_k \in \Omega_k$ for every $i\in \mathbb{N}$ and $k\in[K]$, and therefore $\left(\bigcup_{i \in \mathbb{N}} A_i\right) \cap {X}_k = \bigcup_{i \in \mathbb{N}} (A_i \cap {X}_k) \in \Omega_k$, since $\Omega_k$ is closed under countable unions. This implies that $\bigcup_{i \in \mathbb{N}} A_i\in \Omega$. 
Second, we show that $\mu$ is a measure. We clearly have $\mu(A) = \sum_{k=1}^K \mu_k(A \cap {X}_k) \ge 0$, since each $\mu_k$ is a measure itself, and hence it is non-negative. 
We also have $\mu(\varnothing) = \sum_{k=1}^K \mu_k(\varnothing \cap {X}_k) = \sum_{k=1}^K \mu_k(\varnothing)  = 0$. Finally, if $\{A_j\}_{j\in\mathbb{N}} \subseteq \Omega$ is a countable collection of disjoint sets, we have $\mu(\sqcup_{j\in\mathbb{N}} A_j) = \sum_{k=1}^K \mu_k(\sqcup_{j\in\mathbb{N}} (A_j \cap {X}_k)) = \sum_{k=1}^K \sum_{j\in\mathbb{N}} \mu_k(A_j \cap {X}_k) = \sum_{j\in\mathbb{N}} \sum_{k=1}^K  \mu_k(A_j \cap {X}_k) = \sum_{j\in\mathbb{N}}   \mu(A_j)$.
\end{proof}

We have seen in \S\ref{sec:faces_mixed} that the simplex $\triangle_{K-1}$ can be decomposed as a disjoint union of the relative interior of its faces. Each of these relative interiors is an open subset of an affine subspace isomorphic to $\mathbb{R}^{k-1}$, for $k \in [K]$, which is equipped with the Lebesgue measure for $k>1$ and the counting measure for $k=1$. 
Lemma~\ref{lemma:direct_sum_measure} then guarantees that we can take the direct sum of all these affine spaces as a measure space with the direct sum measure $\mu = \mu^\oplus$ of Definition~\ref{def:direct_sum}.

\section{$K$-Dimensional Gaussian-Sparsemax}\label{sec:stochastic_sparsemax}

We derive expressions for the density $p_{Y,F}(y, f)$ for the  sample-and-project case (stochastic sparsemax). We assume without loss of generality that $f = \{1, 2, \ldots, s\}$ and that we want to compute the density  $p_{Y_{2:K}}(y_2, \ldots, y_{s}, 0, \ldots, 0)$ for given $y_2, \ldots, y_{s} > 0$ such that $y_1 := 1-\sum_{j=2}^s y_j > 0$. 

The process that generates the data is as follows: first, $z_i \sim p_{Z_i}(z_i)$ independently for $i \in [K]$. Then, $y$ is obtained deterministically from $z_{1:K}$ as $y = \mathrm{sparsemax}(z_{1:K})$. 
We assume here that each $p_{Z_i}(z_i)$ is a univariate Gaussian distribution with mean $\mu_i$ and variance $\sigma_i^2$. 

We make use of the following well-known properties of multivariate Gaussians:
\begin{lemma}\label{lemma:prop_gaussian}
Let $Z \sim \mathcal{N}(\mu, \Sigma)$. Then, for any matrix $A$, not necessarily square we have $AZ \sim \mathcal{N}(A\mu, A\Sigma A^\top)$. Furthermore, splitting
\begin{equation}
    Z = \left[
    \begin{array}{c}
         X \\
         Y 
    \end{array}
    \right], \quad
    \mu = \left[
    \begin{array}{c}
         \mu_x \\
         \mu_y 
    \end{array}
    \right], \quad
    \Sigma = \left[
    \begin{array}{cc}
         \Sigma_{xx} & \Sigma_{xy} \\
         \Sigma_{xy}^\top & \Sigma_{yy} 
    \end{array}
    \right],
\end{equation}
we have the following expression for the marginal distribution $P_X(x)$:
\begin{equation}
    X \sim \mathcal{N}(\mu_x, \Sigma_{xx})
\end{equation}
and the following expression for the conditional distribution $P_{X|Y}(x\mid Y=y)$:
\begin{equation}
    X \mid Y=y \,\,\sim\,\, \mathcal{N}(\mu_x + \Sigma_{xy}\Sigma_{yy}^{-1}(y - \mu_y), \Sigma_{xx} - \Sigma_{xy} \Sigma_{yy}^{-1} \Sigma_{xy}^\top).
\end{equation}
\end{lemma}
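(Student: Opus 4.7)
The plan is to establish the three claims in sequence, using the characteristic function as the unifying tool (this avoids having to carry around inverses of possibly singular covariance matrices until strictly needed). Recall that a random vector $Z$ is Gaussian with mean $\mu$ and covariance $\Sigma$ iff its characteristic function is $\varphi_Z(t) = \exp(i t^\top \mu - \tfrac{1}{2} t^\top \Sigma t)$.

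First I would prove the linear transformation property. For any conformable matrix $A$ and any test vector $s$, compute
\begin{equation*}
\varphi_{AZ}(s) = \mathbb{E}\bigl[e^{i s^\top A Z}\bigr] = \mathbb{E}\bigl[e^{i (A^\top s)^\top Z}\bigr] = \varphi_Z(A^\top s) = \exp\bigl(i s^\top (A\mu) - \tfrac{1}{2} s^\top (A\Sigma A^\top) s\bigr),
\end{equation*}
which is the characteristic function of $\mathcal{N}(A\mu, A\Sigma A^\top)$. By uniqueness of characteristic functions, $AZ \sim \mathcal{N}(A\mu, A\Sigma A^\top)$. The marginal statement for $X$ then follows immediately by taking $A = [\,I\;\;0\,]$ with block sizes matching the split of $Z$, which yields $AZ = X$, $A\mu = \mu_x$, and $A\Sigma A^\top = \Sigma_{xx}$.

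For the conditional distribution, I would use the standard decorrelation trick. Define $W := X - \Sigma_{xy} \Sigma_{yy}^{-1} Y$, assuming $\Sigma_{yy}$ is invertible (otherwise one replaces the inverse with a Moore--Penrose pseudoinverse and restricts to the affine support of $Y$; I would mention this but not dwell on it). The pair $(W, Y)$ is a linear transformation of $(X, Y)$, so by the first part it is jointly Gaussian. A direct computation of $\mathrm{Cov}(W, Y) = \Sigma_{xy} - \Sigma_{xy}\Sigma_{yy}^{-1}\Sigma_{yy} = 0$ shows that $W$ and $Y$ are uncorrelated; because they are jointly Gaussian, this implies independence. Therefore the conditional law of $W$ given $Y=y$ equals the marginal law of $W$, namely $\mathcal{N}\bigl(\mu_x - \Sigma_{xy}\Sigma_{yy}^{-1}\mu_y,\; \Sigma_{xx} - \Sigma_{xy}\Sigma_{yy}^{-1}\Sigma_{xy}^\top\bigr)$. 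Finally, writing $X = W + \Sigma_{xy}\Sigma_{yy}^{-1} Y$ and conditioning on $Y=y$ adds the deterministic shift $\Sigma_{xy}\Sigma_{yy}^{-1} y$ to the mean without altering the covariance, giving the stated conditional law.

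The main obstacle is really bookkeeping rather than mathematics: one must check that the Schur complement $\Sigma_{xx} - \Sigma_{xy}\Sigma_{yy}^{-1}\Sigma_{xy}^\top$ is positive semidefinite (automatic from the joint PSD structure of $\Sigma$ via block LDL factorization) and handle the degenerate case where $\Sigma_{yy}$ is singular, in which case the argument goes through with a pseudoinverse and the conditional is supported on an affine subspace. Since the appendix only needs the nondegenerate case (the Gaussian-Sparsemax density derivation uses strictly positive-definite $\sigma_i^2$), I would state the result under the invertibility assumption and remark that the general case follows identically.
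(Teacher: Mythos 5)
Your proof is correct. Note that the paper does not actually prove this lemma at all—it states it as a ``well-known property of multivariate Gaussians'' and uses it as a black box in the Gaussian-Sparsemax density derivation—so there is no in-paper argument to compare against. Your characteristic-function argument for the linear-transformation and marginal claims, together with the decorrelation trick ($W = X - \Sigma_{xy}\Sigma_{yy}^{-1}Y$ uncorrelated with, hence independent of, $Y$) for the conditional, is the standard textbook derivation and fills the omission cleanly; your remarks on the singular-$\Sigma_{yy}$ case and the positive semidefiniteness of the Schur complement are appropriate but, as you say, not needed for the paper's application.
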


We start by picking one index in the support of $y$ -- we assume without loss of generality that this pivot index is $1$ and that the support set is $\{1, \ldots, s\}$ as stated above -- and introducing new random variables $U_i := Z_i - Z_1$ for $i \in \{2, \ldots, K\}$. Note that these new random variables are not independent since they all depend on $Z_1$. Then, from the change of variable formula for non-invertible transformations, we have, for given $y_2, \ldots, y_{s} > 0$ such that $y_1 := 1-\sum_{j=2}^s y_j > 0$:
\begin{align}
p_{Y_{2:K}}(y_2, \ldots, y_s, 0, \ldots, 0) &= p_{U_{2:s}}(u_{2:s}) \times |B| \times \nonumber\\
& \int_{-\infty}^{-{y_1}}\cdots \int_{-\infty}^{-{y_1}} p_{U_{(s+1):K}\mid U_{1:s}}(u_{(s+1):K} \mid U_{1:s} = u_{1:s}) du_{s+1}\cdots du_K. 
\end{align}
where $u_i = y_i - y_1$ for $i\in {2, \ldots, s}$, which can written as $u_{2:s} = By_{2:s} + c$ with $B = I_{s-1} - 1_{s-1}1_{s-1}^\top$ and $c = -1_{s-1}$. The determinant of $B$ is simply $|B| = s$.

Note that $U_{2:K} = AZ_{1:K}$, where $A = [-1_{K-1} , I_{K-1}] \in \mathbb{R}^{(K-1)\times K}$, that is
\begin{equation}
    A = \left[
    \begin{array}{ccccc}
         -1 & 1 & 0 & \cdots & 0 \\
         -1 & 0 & 1 & \cdots & 0 \\
         \vdots & & & \ddots & 0 \\
         -1 & 0 & 0 & \cdots & 1
    \end{array}
    \right].
\end{equation}

From Lemma~\ref{lemma:prop_gaussian}, we have that 
\begin{align}
    p_U(u) &= \mathcal{N}(u; A\mu, A\mathrm{Diag}(\sigma^2)A^\top) \nonumber\\
    &= \mathcal{N}(u; \mu_{2:K} - \mu_1 1_{K-1}, \mathrm{Diag}(\sigma^2_{2:K}) + \sigma^2_1 1_{K-1} 1_{K-1}^\top),
\end{align}
and the marginal distribution $p_{U_{2:s}}(u_{2:s})$ is
\begin{align}
    p_{U_{2:s}}(u_{2:s}) &= \mathcal{N}(u_{2:s}; \mu_{2:s} - \mu_1 1_{s-1}, \mathrm{Diag}(\sigma^2_{2:s}) + \sigma^2_1 1_{s-1} 1_{s-1}^\top) \nonumber\\
    &= \mathcal{N}(y_{2:s} - y_1 1_{s-1}; \mu_{2:s} - \mu_1 1_{s-1}, \mathrm{Diag}(\sigma^2_{2:s}) + \sigma^2_1 1_{s-1} 1_{s-1}^\top).
\end{align}
Note that this is a multivariate distribution whose covariance matrix is the sum of a diagonal matrix with a constant matrix. 

We now calculate the conditional distribution of the variables which are \textit{not} in the support conditioned on the ones which \textit{are} in the support. 
We have according to the notation in Lemma~\ref{lemma:prop_gaussian}:
\begin{align}
    \Sigma_{xx} &= \mathrm{Diag}(\sigma^2_{(s+1):K}) + \sigma_1^2 1_{K-s} 1_{K-s}^\top \nonumber\\
    \Sigma_{yy} &= \mathrm{Diag}(\sigma^2_{2:s}) + \sigma_1^2 1_{s-1} 1_{s-1}^\top \nonumber\\
    \Sigma_{xy} &= \sigma_1^2 1_{K-s} 1_{s-1}^\top.
\end{align}
Using the Sherman-Morrison formula, we obtain
\begin{align}
\Sigma_{yy}^{-1} = \mathrm{Diag}(\sigma^{-2}_{2:s}) - \frac{\sigma_{2:s}^{-2}(\sigma_{2:s}^{-2})^\top}{\sigma_1^{-2} + \sum_{i=2}^s \sigma_i^{-2}},
\end{align}
from which we get
\begin{align}
\tilde{\mu} &:= \mu_x + \Sigma_{xy} \Sigma_{yy}^{-1}(y - \mu_y) \nonumber\\
&= \mu_{(s+1):K} - \mu_1 1_{K-s} + \frac{1}{\sigma^{-2}_1+\sum_{i=2}^s \sigma_i^{-2}} 1_{K-s} (\sigma_{2:s}^{-2})^\top(u_{2:s} - \mu_{2:s} + \mu_1 1_{s-1}) \nonumber\\
&= \mu_{(s+1):K} - \mu_1 1_{K-s} + \frac{1}{\sum_{i=1}^s \sigma_i^{-2}} 1_{K-s} (\sigma_{2:s}^{-2})^\top(y_{2:s} - y_1 1_{s-1} - \mu_{2:s} + \mu_1 1_{s-1})
\nonumber\\
&= \mu_{(s+1):K} - y_1 1_{K-s} + \frac{\sum_{i=1}^s \sigma_i^{-2}(y_i - \mu_i)}{\sum_{i=1}^s \sigma_i^{-2}}  1_{K-s}.
\end{align}
and
\begin{align}
\tilde{\Sigma} &:= \Sigma_{xx} - \Sigma_{xy} \Sigma_{yy}^{-1}\Sigma_{xy}^\top \nonumber\\
&= \mathrm{Diag}(\sigma^2_{(s+1):K}) + \sigma_1^2 1_{K-s} 1_{K-s}^\top - \frac{\sigma_1^2 1_{K-s} (\sigma_{2:s}^{-2})^\top  1_{s-1} 1_{K-s}^\top}{\sigma_1^{-2} + \sum_{i=2}^s \sigma_i^{-2}}\nonumber\\
&= \mathrm{Diag}(\sigma^2_{(s+1):K}) + \sigma_1^2 1_{K-s} 1_{K-s}^\top - \frac{\sigma_1^2 \sum_{i=2}^s \sigma_i^{-2}}{\sigma_1^{-2} + \sum_{i=2}^s \sigma_i^{-2}} 1_{K-s} 1_{K-s}^\top
\nonumber\\
&= \mathrm{Diag}(\sigma^2_{(s+1):K}) + \sigma_1^2  \left(1 - \frac{\sum_{i=2}^s \sigma_i^{-2}}{\sigma_1^{-2} + \sum_{i=2}^s \sigma_i^{-2}} \right) 1_{K-s} 1_{K-s}^\top
\nonumber\\
&= \mathrm{Diag}(\sigma^2_{(s+1):K}) + \frac{1}{\sigma_1^{-2} + \sum_{i=2}^s \sigma_i^{-2}}  1_{K-s} 1_{K-s}^\top
\nonumber\\
&= \mathrm{Diag}(\sigma^2_{(s+1):K}) + \frac{1}{\sum_{i=1}^s \sigma_i^{-2}}  1_{K-s} 1_{K-s}^\top.
\end{align}

Finally, also from Lemma~\ref{lemma:prop_gaussian}, we get 
\begin{align}
p_{U_{(s+1):K}\mid U_{1:s}}(u_{(s+1):K} \mid U_{1:s} &= u_{1:s}) = \mathcal{N}(u_{(s+1):K}; \tilde{\mu}, \tilde{\Sigma}).
\end{align}
Note that this is again a multivariate Gaussian distribution with a covariance matrix $\tilde{\Sigma}$ which is the sum of a diagonal and a constant matrix. 

Putting everything together, we get
\begin{align}
p_{Y_{2:K}}(y_2, \ldots, y_s, 0, \ldots, 0) &= s \mathcal{N}(y_{2:s} - y_1 1_{s-1}; \mu_{2:s} - \mu_1 1_{s-1}, \mathrm{Diag}(\sigma^2_{2:s}) + \sigma^2_1 1_{s-1} 1_{s-1}^\top) \times \nonumber\\
& F\left(\mu_{(s+1):K} + \frac{\sum_{i=1}^s \sigma_i^{-2}(y_i - \mu_i)}{\sum_{i=1}^s \sigma_i^{-2}}  1_{K-s}; \,\, \tilde{\Sigma}\right),
\end{align}
where 
$F(v; \tilde{\Sigma}) = \int_{-\infty}^0 \cdots \int_{-\infty}^0 \mathcal{N}(0; v, \tilde{\Sigma})$ is the negative orthant cumulative distribution of a multivariate Gaussian with mean $v$ and covariance $\tilde{\Sigma}$.  
Efficient Monte Carlo approximations of this integral have been proposed by \citet{genz1992numerical}. %
Using again Lemma~\ref{lemma:prop_gaussian} but now in the reverse direction, the function $F(v; \tilde{\Sigma})$, setting $v=\mu_{(s+1):K} + \frac{\sum_{i=1}^s \sigma_i^{-2}(y_i - \mu_i)}{\sum_{i=1}^s \sigma_i^{-2}}  1_{K-s}$,  can be reduced to a unidimensional integral:
\begin{align}
F(v; \tilde{\Sigma}) &= \int_{-\infty}^{\infty} \mathrm{d}t \,\, \mathcal{N}\left(t; \frac{\sum_{i=1}^s \sigma_i^{-2} (\mu_i - y_i)}{\sum_{i=1}^s \sigma_i^{-2}}, \frac{1}{\sum_{i=1}^s \sigma_i^{-2}}\right) \prod_{i=s+1}^K \int_{-\infty}^t \mathrm{d}r \mathcal{N}(r; \mu_i, \sigma_i^2) \nonumber\\ %
&= \int_{-\infty}^{\infty} \mathrm{d}t \,\, \phi\left(\frac{\sum_{i=1}^s \sigma_i^{-2} (t + y_i - \mu_i)}{\sqrt{\sum_{i=1}^s \sigma_i^{-2}}}\right) \prod_{i=s+1}^K \Phi\left(\frac{t - \mu_i}{\sigma_i}\right)\nonumber\\
&= \int_{0}^{1} \mathrm{d}u \,\, \prod_{j=s+1}^K \Phi\left(\frac{\Phi^{-1}(u)}{\sigma_j \sqrt{\sum_{i=1}^s \sigma_i^{-2}}} -  \frac{\sum_{i=1}^s \sigma_i^{-2} (y_i - \mu_i + \mu_j)}{\sigma_j \sum_{i=1}^s \sigma_i^{-2}}\right).
\end{align}
where $\phi(x)$ and $\Phi(x)$ are the p.d.f. and c.d.f. of a standard Gaussian distribution, respectively, and we applied the change of variables formula in the last line, with $u = \Phi\left(\sum_{i=1}^s \sigma_i^{-2} (t + y_i - \mu_i) / \sqrt{\sum_{i=1}^s \sigma_i^{-2}}  \right)$, whose inverse is $t = \frac{\Phi^{-1}(u)}{\sqrt{\sum_{i=1}^s \sigma_i^{-2}}} - \frac{\sum_{i=1}^s \sigma_i^{-2} (y_i - \mu_i)}{\sum_{i=1}^s \sigma_i^{-2}}$; the Jacobian of this transformation cancels with the $\phi$-term. 
We can compute the quantities above via the $\mathrm{erf}$ function and its inverse $\mathrm{erf}^{-1}$, as 
$\Phi(x) = \frac{1}{2}\left( 1+ \mathrm{erf}\left(\frac{x}{\sqrt{2}} \right)\right)$ and $\Phi^{-1}(u) = \sqrt{2} \mathrm{erf}^{-1}(2u - 1)$.

When the variance is constant, $\sigma_i = \sigma \,\, \forall i \in [K]$, and since $\sum_{i =1}^s y_i = 1$, the expressions above simplify to 
\begin{align}
F(v; \tilde{\Sigma}) &= \int_{-\infty}^{\infty} \mathrm{d}t \,\, \mathcal{N}\left(t; \frac{-1 + \sum_{i=1}^s  \mu_i}{|s|}, \frac{\sigma^2}{|s|}\right) \prod_{i=s+1}^K \int_{-\infty}^t \mathrm{d}r \mathcal{N}(r; \mu_i, \sigma^2) \nonumber\\ %
&= \int_{-\infty}^{\infty} \mathrm{d}t \,\, \phi\left(\frac{1 + |s|t - \sum_{i=1}^s \mu_i}{\sigma\sqrt{|s|}}\right) \prod_{i=s+1}^K \Phi\left(\frac{t - \mu_i}{\sigma}\right)\nonumber\\
&= \int_{0}^{1} \mathrm{d}u \,\, \prod_{j=s+1}^K \Phi\left(\frac{\Phi^{-1}(u)}{\sqrt{|s|}} - \frac{1}{\sigma}\left(\mu_j + \frac{1 - \sum_{i=1}^s \mu_i}{ |s|}\right)\right).
\end{align}

\section{Sampling from the Mixed Dirichlet Distribution}\label{sec:mixed-dirichlet-sampling}

PyTorch code for a batched implementation of the procedures discussed in this section %
will be made available online.

\paragraph{Distribution over Face Lattice.} The distribution $P_F$ is a discrete exponential family whose support is the set of all $2^K - 1$ proper faces of the simplex. 
The computation of the natural parameter in Equation (\ref{eq:gibbs}) factorizes over vertices, but, because the empty face is not in the support of $P_F$, the computation of the log-normalizer requires  special attention.
The obvious strategy is to enumerate all assignments to $(F_{\mathcal I}, \bm{w}^\top \bm{\phi}(F_{\mathcal I}))$, one per proper face. 
This strategy is not feasible for even moderately large $K$. 
We can, however, exploit a compact representation of the set of proper faces by encoding the face lattice in a directed acyclic graph (DAG) where each proper face is associated with a complete path from the DAG's source to the DAG's sink.

This DAG $\mathcal G = (\mathcal Q, \mathcal A)$ is a collection of states $u \in \mathcal Q  \subset \{0, \ldots, K+1\} \times \{0, 1\} \times \{0, 1\}$ and arcs $(u, v) \in \mathcal A \subseteq \mathcal Q \times \mathcal Q$. 
A path that ends in $(k, b, s) \in \mathcal Q$ corresponds to a non-empty subset of vertices if, and only if, $s=1$. A path that contains $(k, b, s) \in \mathcal Q$ corresponds to a face that includes the $k$th vertex if, and only if, $b=1$.
If $u = (k_1, b_1, s_1) \in \mathcal Q$ and $v = (k_2, b_2, s_2) \in \mathcal Q$, then $u \le v$ iff $k_1 \le k_2$.
The state $(0, 0, 0)$ is the DAG's unique source, and the state $(K+1, 0, 1)$ is the DAG's unique sink. 
Because $=1$ for the sink, no complete path (\textit{i.e.}, from source to sink) will correspond to an empty face.
An arc is a pair of states $(u, v)$, where $u$ is the origin and $v$ is the destination. For every state $(k,b,s)$ such that $k<K$, we have arcs $((k, b, s), (k+1, 1, 1))$ and $((k, b, s), (k+1, 0, s))$. For every state $(K, b, 1)$, we have an arc to the final state $(K+1, 0, 1)$. 
By construction, the number of states (and arcs) in the DAG is proportional to $K$. A complete path (from source to sink) has length $K+1$, and it uniquely identifies a proper face. 
To compute the log-normalizer of Equation (\ref{eq:gibbs}), we run the forward algorithm through $\mathcal G$, with an arc's weight given by $(-1^{1-b})w_k$ if the arc's destination is the state $(k, b, s)$ or $0$ if the arc's destination is the DAG's sink.
Running the backward algorithm through $\mathcal G$ (that is, running the forward algorithm from sink to source) evaluates the marginal probabilities required for ancestral sampling.

\paragraph{Entropy of Mixed Dirichlet.} 
The intrinsic representation of a Mixed Dirichlet distribution allows for efficient computation of $H(F)$ and $\KL(P_F || Q_F)$ necessary for $H^\oplus(Y)$ and $\KL^\oplus(p^\oplus_Y || q^\oplus_Y)$. 
The continuous parts $H(Y|F)$ and $\mathbb E_{P_F}[\KL(p_{Y|F=f} || q_{Y|F=f})]$ require solving an expectation with an exponential number of terms (one per proper face).
The distribution $P_{F}$ is a discrete exponential family indexed by the natural parameter $\bm{w} \in \mathbb R^K$, its discrete entropy is $H(F)=\log Z(\bm{w}) - \langle \bm{w}, \nabla_{\bm{w}} \log Z(\bm{w})\rangle$, where $\nabla_{\bm{w}} \log Z(\bm{w}) = \mathbb E[\bm{\phi}(F)]$. 
KL divergence from a member $Q_F$ of the same family but with parameter $\bm{v}$ is given by
\begin{equation}\label{eq:KL-expfam}
    \KL(P_F||Q_F) = \log Z(\bm{v}) - \log Z(\bm{w}) - \langle \bm{v}-\bm{w}, \nabla_{\bm{w}} \log Z(\bm{w})\rangle.
\end{equation}
The forward-backward algorithm computes both the log-normalizer and its gradient (the expected sufficient statistic) in a single pass through a DAG of size $\mathcal O(K)$.  For results concerning entropy, cross-entropy, and relative entropy of exponential families see for example \citet{nielsen2010entropies}.

The continuous part $H(Y|F)$ is the expectation of the differential entropy of $Y|F=f$, each a Dirichlet distribution over the face $f$, under $P_F$. For small $K$ we can enumerate the $2^K-1$ terms in this expectation, since the entropy of each Dirichlet is known in closed-form. In general, for large $K$, we can obtain an MC estimate by sampling independently from $P_F$ and assessing the differential entropy of $Y|F=f$ only for sampled faces.
For $\mathbb E_{P_F}[\KL(p_{Y|F=f} || q_{Y|F=f})]$ the situation is similar, since KL for two Dirichlet distributions on the same face $f$ is known in closed-form.

\paragraph{Gradient Estimation.} Parametrizing a Mixed Dirichlet variable takes two parameter vectors, namely, $\bm w$ and $\bm \alpha$. In a VAE, those are predicted from a given data point (\textit{e.g.}, an MNIST digit) by an inference network. To update the parameters of the inference network we need a Monte Carlo estimate of the gradient $\nabla_{\bm w, \bm \alpha} \mathbb E_{F|\bm w}[ \mathbb E_{Y|F=f, \bm \alpha}[\ell(y)] ]$ with respect to $\bm w$ and $\bm \alpha$ of a loss function $\ell$ computed in expectation under the Mixed Dirichlet distribution.
By chain rule, we can rewrite the gradient as follows:
\begin{align}
   \mathbb E_{F|\bm w}\left[ \mathbb E_{Y|F=f, \bm \alpha}[\ell(y)] \nabla_{\bm w} \log P_{F}(f|\bm w) +  \nabla_{\bm \alpha}\mathbb E_{Y|F=f, \bm \alpha}[\ell(y)] \right]  ~.
\end{align}
Given a sampled face $F=f$, we can MC estimate  
both contributions to the gradient, the first takes MC estimation of $\mathbb E_{Y|F=f', \bm \alpha}[\ell(y)]$, which is straightforward, the second can be done via implicit reprametrization \citep{FigurnovEtAl2018Implicit}.
To reduce the variance of the score function estimator (first term), we employ a sampled self-critic baseline, \textit{i.e.}, an MC estimate of $\mathbb E_{Y|F=f', \bm \alpha}[\ell(y)]$ given an independently sampled face $f'$.

\section{Information Theory for Mixed Random Variables}\label{sec:mutual_information}

\subsection{Mutual Information for Mixed Random Variables}\label{sec:mutual_information_mixed}

Besides the direct sum entropy and the Kullback-Leibler divergence for mixed distributions, we can also define a mutual information for mixed random variables as follows:

\smallskip

\begin{definition}[Mutual information]
For \concrete random variables $Y$ and $Z$, the mutual information between $Y$ and $Z$ is 
\begin{align}
I^\oplus(Y; Z) &= H^\oplus(Y) - H^\oplus(Y\mid Z) \nonumber\\
&= H(F) + H(Y\mid F) - H(F\mid Z) + H(Y \mid F, Z)\nonumber\\ 
&= I(F; Z) + I(Y; Z \mid F) \ge 0. 
\end{align}
\end{definition}
With these ingredients it is possible to provide counterparts for channel coding theorems by combining Shannon's discrete and continuous channel theorems \citep{shannon1948mathematical}. 

\subsection{Proof of Proposition~\ref{prop:coding} (Code Optimality)}\label{sec:optimal_code}

Proposition~\ref{prop:coding} is a consequence of the following facts \citep{shannon1948mathematical}: 
The discrete entropy of a random variable representing an alphabet symbol corresponds to the average length of the optimal code for the symbols in the alphabet, in a lossless compression setting. Besides, it is known \citep{cover2012elements} that the optimal number of bits to encode a $D$-dimensional continuous random variable with $N$ bits of precision equals its differential entropy (in bits) plus $ND$.%
\footnote{\citet[Theorem~9.3.1]{cover2012elements} provide an informal proof for $D=1$, but it is straightforward to extend the same argument for $D>1$.} %
Therefore, the direct sum entropy \eqref{eq:entropy_simplex} is the average length of the optimal code where the sparsity pattern of $\bm{y} \in \triangle_{K-1}$ must be encoded losslessly and where there is a predefined bit precision for the fractional entries of $\bm{y}$. 
Eq.~\ref{eq:coding_entropy} follows from these facts and the definition of direct sum entropy (Definition~\ref{def:direct_sum_entropy}).

\section{Direct Sum Entropy and Kullback-Leibler Divergence of 2D Gaussian-Sparsemax}\label{sec:entropy_gaussian_sparsemax}

From \eqref{eq:gaussian_sparsemax_2d} and Definition~\ref{def:direct_sum_entropy}, the direct sum entropy of the 2D Gaussian sparsemax is: 
    \begin{align}
    H^\oplus(Y) &= H(F) + H(Y\mid F)\nonumber\\
    &= H([P_0, P_1, 1-P_0-P_1]) - (1-P_0-P_1) \int_{0}^{1} \frac{\mathcal{N}(y; z, \sigma^2) }{1-P_0-P_1} \log \frac{\mathcal{N}(y; z, \sigma^2) }{1-P_0-P_1} dy\nonumber\\
    &= -P_0 \log P_0 - P_1 \log P_1 - \int_{0}^{1} \mathcal{N}(y; z, \sigma^2) \log \mathcal{N}(y; z, \sigma^2)  dy,
    \end{align}
where we have $P_0 = \frac{1-\mathrm{erf}(z/(\sqrt{2}\sigma))}{2}$, $P_1 = \frac{1+\mathrm{erf}((z-1)/(\sqrt{2}\sigma))}{2}$, and 
\begin{align}
\lefteqn{-\int_{0}^{1} \mathcal{N}(y; z, \sigma^2) \log \mathcal{N}(y; z, \sigma^2)  dy=}\nonumber\\ &= \int_{0}^{1} \mathcal{N}(y; z, \sigma^2) \left(\log(\sqrt{2\pi\sigma^2}) + \frac{(y-z)^2}{2\sigma^2}\right)dy \nonumber\\
&= (1-P_0-P_1) \log(\sqrt{2\pi\sigma^2}) + \frac{\sigma}{2} \int_{\frac{-z}{\sigma}}^{\frac{1-z}{\sigma}} \mathcal{N}(t; 0, 1) t^2 dt \nonumber\\
&= (1-P_0-P_1) \log(\sqrt{2\pi\sigma^2}) \nonumber\\ 
& \quad+ \frac{\sigma}{2} \left( \frac{\mathrm{erf}\left(\frac{1-z}{\sqrt{2\sigma^2}}\right) - \mathrm{erf}\left(-\frac{z}{\sqrt{2\sigma^2}}\right)}{2} - \frac{1-z}{\sigma}\mathcal{N}\left(\frac{1-z}{\sigma}; 0, 1\right) - \frac{z}{\sigma}\mathcal{N}\left(-\frac{z}{\sigma}; 0, 1\right)\right),
\end{align}
which leads to a closed form for the entropy. 
As for the KL divergence, we have:
\begin{align}
\KL^\oplus(p_Y^\oplus\|q_Y^\oplus) &:= \KL(P_F\|Q_F) + \mathbb{E}_{f \sim P_F} \left[ \KL(p_{Y\mid F}(\cdot \mid F=f) \| q_{Y\mid F}(\cdot \mid F=f) \right] \nonumber\\
&= \sum_{f \in \bar{\mathcal{F}}(\mathcal{P})} P_F(f) \log \frac{P_F(f)}{Q_F(f)}
+ \sum_{f \in \bar{\mathcal{F}}(\mathcal{P})} P_F(f) \left(\int_{f} p_{Y\mid F}(\bm{y}\mid f) \log \frac{p_{Y\mid F}(\bm{y}\mid f)}{q_{Y\mid F}(\bm{y}\mid f)}\right) \nonumber\\
&= P_0 \log \frac{P_0}{Q_0} + P_1 \log \frac{P_1}{Q_1} + \int_{0}^{1} \mathcal{N}(y; z_P, \sigma^2_P) \log \frac{\mathcal{N}(y; z_P, \sigma^2_P)}{\mathcal{N}(y; z_Q, \sigma^2_Q)},\end{align}
where we have $P_0 = \frac{1-\mathrm{erf}(z_P/(\sqrt{2}\sigma_P))}{2}$, $P_1 = \frac{1+\mathrm{erf}((z_P-1)/(\sqrt{2}\sigma_P))}{2}$, 
$Q_0 = \frac{1-\mathrm{erf}(z_Q/(\sqrt{2}\sigma_Q))}{2}$, $Q_1 = \frac{1+\mathrm{erf}((z_Q-1)/(\sqrt{2}\sigma_Q))}{2}$,
and 
\begin{align}
\lefteqn{\int_{0}^{1} \mathcal{N}(y; z_P, \sigma^2) \log \frac{\mathcal{N}(y; z_P, \sigma_P^2)}{\mathcal{N}(y; z_Q, \sigma_Q^2)}  dy=}\nonumber\\ 
&= \int_{0}^{1} \mathcal{N}(y; z_P, \sigma_P^2) \left(-\log\frac{\sigma_P}{\sigma_Q} - \frac{(y-z_P)^2}{2\sigma_P^2} + \frac{(y-z_Q)^2}{2\sigma_Q^2}\right)dy \nonumber\\
&= -(1-P_0-P_1) \log\frac{\sigma_P}{\sigma_Q} - \frac{\sigma_P}{2} \int_{\frac{-z_P}{\sigma_P}}^{\frac{1-z_P}{\sigma_P}} \mathcal{N}(t; 0, 1) t^2 dt 
+ \frac{\sigma_Q}{2} \int_{\frac{-z_Q}{\sigma_Q}}^{\frac{1-z_Q}{\sigma_Q}} \mathcal{N}(t; 0, 1) t^2 dt 
\nonumber\\
&= -(1-P_0-P_1) \log\frac{\sigma_P}{\sigma_Q}  \nonumber\\ 
& \quad- \frac{\sigma_P}{2} \left( \frac{\mathrm{erf}\left(\frac{1-z_P}{\sqrt{2\sigma_P^2}}\right) - \mathrm{erf}\left(-\frac{z_P}{\sqrt{2\sigma_P^2}}\right)}{2} - \frac{1-z_P}{\sigma_P}\mathcal{N}\left(\frac{1-z_P}{\sigma_P}; 0, 1\right) - \frac{z_P}{\sigma_P}\mathcal{N}\left(-\frac{z_P}{\sigma_P}; 0, 1\right)\right) \nonumber\\
& \quad+ \frac{\sigma_Q}{2} \left( \frac{\mathrm{erf}\left(\frac{1-z_Q}{\sqrt{2\sigma_Q^2}}\right) - \mathrm{erf}\left(-\frac{z_Q}{\sqrt{2\sigma_Q^2}}\right)}{2} - \frac{1-z_Q}{\sigma_Q}\mathcal{N}\left(\frac{1-z_Q}{\sigma_Q}; 0, 1\right) - \frac{z_Q}{\sigma_Q}\mathcal{N}\left(-\frac{z_Q}{\sigma_Q}; 0, 1\right)\right).
\end{align}

\section{Maximum Direct Sum Entropy of Mixed Distributions}\label{sec:maxent}

Start by noting that the differential entropy of a Dirichlet random variable $Y \sim \mathrm{Dir}(\bm{\alpha})$ %
is 
\begin{equation}
H(Y) = \log B(\bm{\alpha}) + (\alpha_0 - K) \psi(\alpha_0)  - \sum_{k=1}^K (\alpha_k-1)\psi(\alpha_k),
\end{equation}
where $\alpha_0 = \sum_{k}^K \alpha_k$ and $\psi$ is the digamma function. 
When $\bm{\alpha} = \mathbf{1}$, this becomes a flat (uniform) density and the entropy attains its maximum value:
\begin{equation}\label{eq:entropy_flat_dirichlet}
H(Y) = \log B(\bm{\alpha}) = -\log (K-1)!.
\end{equation}
This value is negative for $K>2$; it follows that the differential entropy of any distribution in the simplex is negative. 
We next determine the distribution $p^\oplus_Y(\bm{y})$ with the largest direct sum entropy. Considering only the maximal face, which corresponds to $\mathrm{ri}(\triangle_{K-1})$, the distribution with the largest entropy is the flat distribution, whose entropy is given in \eqref{eq:entropy_flat_dirichlet}. In our definition of entropy in \eqref{eq:entropy_simplex} this corresponds to a deterministic $F$ which puts all probability mass in this maximal face. 
At the opposite extreme, if we only assign probability to pure vertices, \textit{i.e.}, if we constrain to \emph{minimal} faces, a uniform choice leads to a (Shannon) entropy of $\log K$. We will show that looking at \emph{all} faces further increases entropy. %

Looking at \eqref{eq:entropy_simplex}, we see that the differential entropy term $H(Y \mid F=f)$ can be maximized separately for each $f$, the solution being the flat distribution on face $f \simeq \triangle_{k-1}$, which has entropy $-\log (k-1)!$, where $1 \le k \le K$. By symmetry, all faces of the same dimension $k-1$ look the same, and there are ${K \choose k}$ of them. Therefore the maximal entropy distribution is attained with $P_F$ of the form $P_F(f) = g(k) / {K \choose k}$ where $g : [K] \rightarrow \mathbb{R}_+$ is a function satisfying $\sum_{k=1}^K g(k) = 1$ (which can be regarded as a categorical probability mass function). 
If we choose a precision of $N$ bits, this leads to:
\begin{align}
H_N^\oplus(Y) &= -\sum_{f \in \bar{\mathcal{F}}(\triangle_{K-1})} P_F(f) \log P_F(f) 
+ \sum_{f \in \bar{\mathcal{F}}(\triangle_{K-1})} P_F(f) \left(-\int_{f} p_{Y\mid F}(\bm{y}\mid f) \log p_{Y\mid F}(\bm{y}\mid f)\right)\nonumber\\
& \quad + N\sum_{k=1}^K (k-1)g(k) \\
&= -\sum_{k=1}^K g(k) \log \frac{g(k)}{{K \choose k}}
- \sum_{k=1}^K g(k) (\log (k-1)! - N(k-1)) \nonumber\\
&= -\sum_{k=1}^K g(k) \log g(k) + 
\sum_{k=1}^K g(k) \log \frac{{K \choose k} 2^{N(k-1)}}{(k-1)!}.\label{eq:maxent_gk}
\end{align}
Note that the first term is the entropy of $g(\cdot)$ and the second term is a linear function of $g(\cdot)$, that is, \eqref{eq:maxent_gk} is a entropy-regularized argmax problem, hence the $g(\cdot)$ that maximizes this objective is the softmax transformation of the vector with components $\log \frac{{K \choose k}2^{N(k-1)}}{(k-1)!}$, that is:
\begin{equation}
g(k) = \frac{\frac{{K \choose k}2^{N(k-1)}}{(k-1)!}}{\sum_{j=1}^K \frac{{K \choose j}2^{N(j-1)}}{(j-1)!}}, 
\end{equation}
and the maximum entropy value is %
\begin{equation}
H^\oplus_{N, \mathrm{max}}(Y) = \log {\sum_{k=1}^K \frac{{K \choose k}2^{N(k-1)}}{(k-1)!}} = \log L_{K-1}^{(1)}(-2^N),
\end{equation}
where $L_{n}^{(\alpha)}(x)$ denotes the generalized Laguerre polynomial \citep{sonine1880recherches}, as stated in Proposition~\ref{prop:maxent_mixed_distribution}. 

\begin{figure}[t]
\begin{center}
\includegraphics[width=0.7\columnwidth]{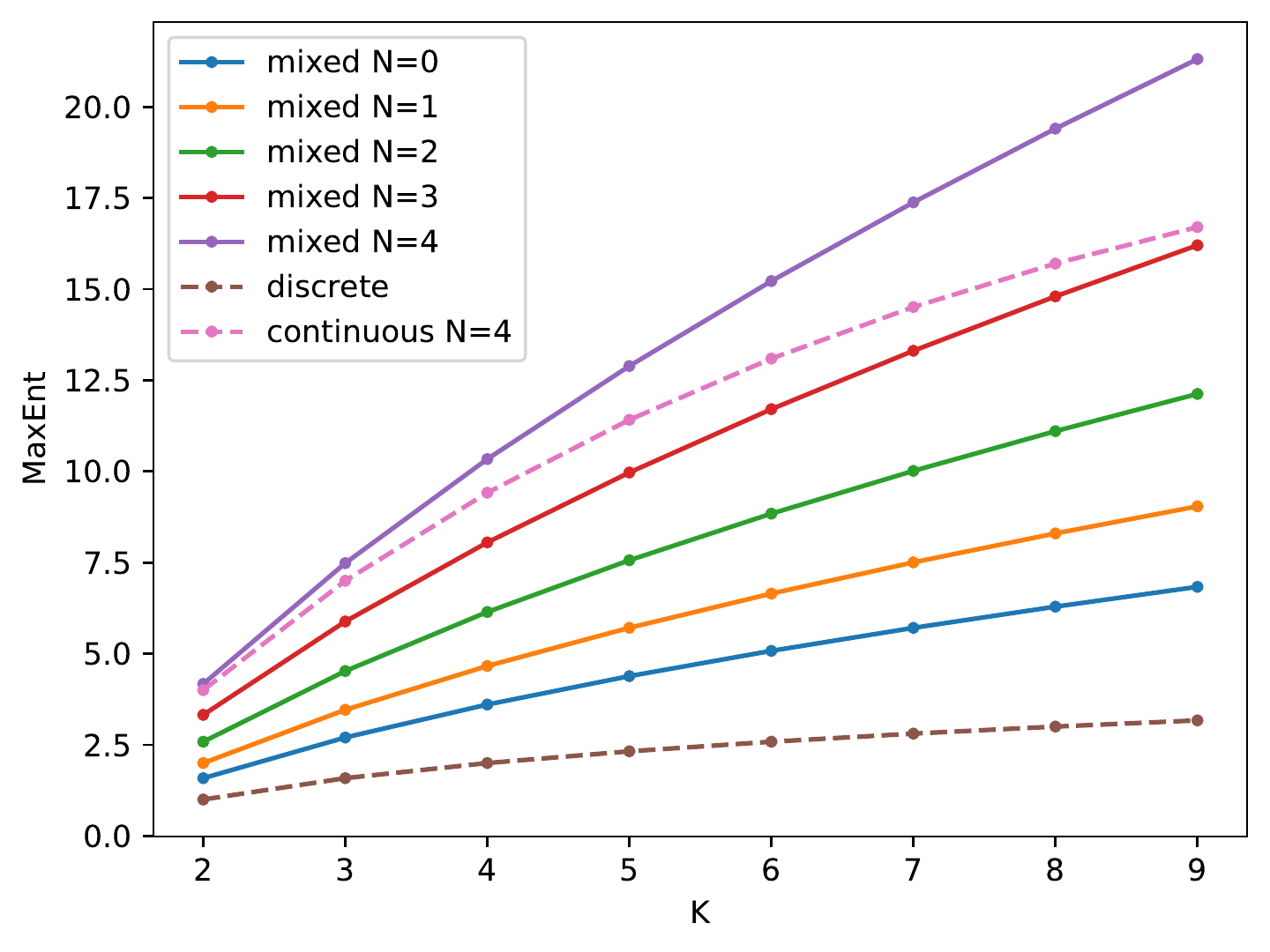}
\caption{Maximum entropies for \concrete distributions for several values of bit precision $N$, as a function of the simplex dimensionality $K-1$. Shown are also the maximum entropies for the corresponding discrete and continuous cases, for comparison.\label{fig:maxent}}
\label{default}
\end{center}
\end{figure}

Figure~\ref{fig:maxent} shows how the largest direct sum entropy varies with $K$, compared to the discrete and continuous entropies.  
For example, for $K=2$, we obtain $g(1) = \frac{2}{2 + 2^N}$,  $g(2) = \frac{2^N}{2 + 2^N}$, and $H^\oplus_N(Y) = \log(2 + 2^N)$, therefore, in the worst case, we need at most $\log_2 (2 + 2^N)$ bits to encode $Y \in \triangle_1$ with bit precision $N$. 
This is intuitive: the faces of $\triangle_1$ are the two vertices $\{(0,1)\}$ and $\{(1,0)\}$ and the line segment $[0,1]$. The first two faces have a probability of $\frac{1}{2+2^N}$ and the last one have a probability $\frac{2^N}{2 + 2^N}$. To encode a point in the simplex we first need to indicate which of these three faces it belongs to (which requires $\frac{2}{2+2^N} \log_2 (2+2^N) + \frac{2^N}{2 + 2^N} \log_2 \frac{2 + 2^N}{2^N} = \log_2 (2+2^N) - \frac{N2^N}{2+2^N}$ bits), and with $\frac{2^N}{2 + 2^N}$ probability we need to encode a point uniformly distributed in the segment $[0,1]$ with $N$ bit precision, which requires extra $\frac{N2^N}{2 + 2^N}$ bits on average. Putting this all together, the total number of bits is $\log_2 (2+2^N)$, as expected.

\section{Experimental details}\label{sec:experimental_details}

In this section, we give further details on the architectures and hyperparameters used in our main experiments (App.~\ref{App:Emergent communication game}, App.~\ref{App:Bit-vector VAE}, and App.\ref{App:Mixed-Latent VAE on MNIST}), which are focused on learning sparse representations, in App.~\ref{App:GLM} we experiment with the Mixed Dirichlet as a likelihood function to model simplex-valued data. Finally, in App~\ref{App:computing_infrastructure}, we describe our computing infrastructure.

\subsection{Emergent communication game}
\label{App:Emergent communication game}

Let $\mathcal{V}=\{v_1,\ldots,v_{|\mathcal{V}|}\}$ be the collection of images from which the sender sees a single image $v_j$. We consider a latent variable model with observed variables $x = (\mathcal{V}, j) \in \mathcal{X}$ and latent stochastic variables $y$ chosen from a vocabulary set $\mathcal{Y}$.
We let the sender correspond to the probability model $\pi(y | x, \theta) = p(y \mid v_j, \theta_\pi)$ of the latent variable and the receiver be modeled as $p(j \mid \mathcal{V}, y, \theta_\ell)$.
The overall fit to the dataset $\mathcal D$ is $\sum_{x \in \mathcal{D}}
\mathcal{L}_x(\theta)$, where we marginalize the latent variable $y$ to compute the loss of each observation, that is
\begin{equation}\label{eq:loss_observation}
	\mathcal{L}_{x}(\theta) =
    \mathbb E_{\pi(y \mid x, \theta)}
    \left[ \ell(x, y; \theta)\right] =
    \sum_{y \in \mathcal Y} \pi(y | x, \theta)~\ell(x, y; \theta),
\end{equation}
where the receiver is used to define the downstream loss, $\ell (x, y;
\theta) \coloneqq -\log p(j \mid \mathcal{V}, y, \theta_\ell)$.
Notably, we do not add a (discrete) entropy regularization term of $\pi (y\mid x, \theta)$ to \eqref{eq:loss_observation} with a coefficient as an hyperparameter as in \cite{correia2020efficient}.

\paragraph{Data.}
The dataset consists of a subset of ImageNet \citep{deng2009imagenet}\footnotemark containing 463,000 images that are then passed through a pretrained VGG \citep{convnet}, from which the representations at the second-to-last fully connected layers are saved and used as input to the {sender} and the {receiver}. To get the dataset visit \url{https://github.com/DianeBouchacourt/SignalingGame} \citep{bouchacourt-baroni-2018-agents}.
\footnotetext{Available for research under the terms and conditions described in \url{https://www.image-net.org/download}.}

\paragraph{Architecture and hyperparameters.} 
We follow the experimental procedure described in \cite{correia2020efficient}: the architecture of both the \textit{sender} and the \textit{receiver} are identical to theirs, we set the size of the collection of images $|\mathcal{V}|$ to $16$, the size of the vocabulary of the sender to $256$, the hidden size to $512$, and the embedding size to $256$. We choose the best hyperparameter configuration by doing a grid search on the learning rate (0.01, 0.005, 0.001) %
and, for each configuration, evaluating the communication success on the validation set. 
For the Gumbel-Softmax models, the temperature is annealed using the schedule $\tau = \max (0.5, \exp{-rt})$, where $r=1e-5$ and $t$ is updated every $N=1000$ steps. For the $K$-D Hard Concrete we use a scaling constant $\lambda=1.1$ and for Gaussian-Sparsemax we set $\Sigma = I$.
All models were trained for 500 epochs using the Adam optimizer with a batch size of 64.

\subsection{Bit-Vector VAE}
\label{App:Bit-vector VAE}
In this experiment, the training objective is to minimize the negative ELBO: we rewrite \eqref{eq:loss_observation} with $\ell(x, y; \theta_\ell) = - \log \frac{p(x, y | \phi)}{q(y | x,\lambda)}$, where we decompose the approximate posterior as $q(y \mid x, \lambda) = \prod_{i=1}^D q(y_i \mid x, \lambda)$, with $D=128$ being the number of binary latent variables $y_i$.
For each bit, we consider a uniform prior $p(y)$ when the approximate posterior is purely discrete or continuous, and the maxent mixed distribution (Prop.~\ref{prop:maxent_mixed_distribution}) for the mixed cases.

\paragraph{Data.}
We use Fashion-MNIST \citep{fmnist}\footnotemark, comprising of $28\times28$  grayscale %
images of fashion products from different categories. 
\footnotetext{License available at \url{https://github.com/zalandoresearch/fashion-mnist/blob/master/LICENSE}.}

\paragraph{Architecture and hyperparameters.} We follow the architecture and experimental procedure described in \cite{correia2020efficient}, where the inference and generative network consist of one 128-node hidden layer with ReLU activation functions. We choose the best hyperparameter configuration by doing a grid search on the learning rate (0.0005, 0.001, 0.002) and choosing the best model based on the value of the negative ELBO on the validation set.
For the Gumbel-Softmax models, the temperature is annealed using the schedule $\tau = \max (0.5, \exp{-rt})$, where we search $r\in\{1e-5,1e-4\}$ and $t$ is updated every $N=1000$ steps; also, for these models, we relax the sample into the continuous space but
assume a discrete distribution when computing the entropy of $\pi(y
\mid x, \theta)$, leading to an incoherent evaluation for Gumbel-Softmax without ST. For the Hard Concrete distribution we follow \cite{louizos2018learning} and stretch the concrete distribution to the $(-0.1,1.1)$ interval and then apply a hard-sigmoid on its random samples. For the Gaussian-Sparsemax, we use $\sigma^2=1$.
All models were trained for 100 epochs using the Adam optimizer with a batch size of 64.

\subsection{Mixed-Latent VAE on MNIST}
\label{App:Mixed-Latent VAE on MNIST}

\paragraph{Data.} We use stochastically binarized MNIST \citep{lecun2010mnist}.\footnotemark The first 55,000 instances are used for training, the next 5,000 instances for development and the remaining 10,000 for test. 

\paragraph{Architecture and hyperparameters.} The model is a VAE with a $K$-dimensional latent code (with $K=10$ throughout), both the decoder (which parametrizes the observation model) and the encoder (which parametrizes the inference model) are based on feed-forward neural networks. The decoder maps from a sampled latent code $z$ to a collection of $D=28\times 28$ Bernoulli distributions. We use a feed-forward decoder with two ReLU-activated hidden layers, each with 500 units. 
The encoder maps from a $D$-dimensional data point $x$ to the parameters of a variational approximation to the model's posterior distribution. For Gaussian models we predict $K$ locations and $K$ scales (with softplus activation). For Dirichlet models we predict $K$ concentrations using softplus clamped to $[10^{-3}, 10^3]$. For Mixed Dirichlet models we predict $K$ log-potentials clamped to $[-10, 10]$, and $K$ concentrations using softplus clamped to $[10^{-3}, 10^{3}]$. 
For regularization we employ dropout and L2 penalty. 
We use two Adam optimizers, one for the parameters of the decoder and another for the parameters of the encoder, each with its own learning rate. We search for the best configuration of hyperparameters using importance-sampled estimates of negative log-likelihood of the model given a development set. The parameters we consider are: learning rates (in $\{10^{-5}, 10^{-4}, 10^{-3}\}$, and their halves), regularization strength (in $\{0, 10^{-7}, 10^{-6}, 10^{-5}\}$), and dropout rate (in $\{0, 0.1, 0.2, 0.3\}$). All models are trained for 500 epochs with stochastic mini batches of size 100. During training, for gradient estimation, we sample each latent variable exactly once. For importance sampling, we use 100 samples for model selection, and 1000 samples for evaluation on the test set. 

\paragraph{Gradient estimation.} We use single-sample reparametrized gradient estimates for most models: Gaussian (unbiased), Dirichlet (unbiased),  Gumbel-Softmax ST (biased due to straight-trough), and Mixed Dirichlet (unbiased). A gradient estimate for Mixed Dirichlet models, in particular, has two sources of stochasticity, one from the implicit reparametrization for Dirichlet samples, another from score function estimation for Gibbs samples. To reduce variance due to score function estimation, the Mixed Dirichlet model we report employs an additional sample, used to compute a self-critic baseline. We have also looked into a simpler variant, which dispenses with this self-critic baseline, and instead employs a simple running average baseline, this model performed very close to that with a self-critic. 
The purely discrete latent variable model (Categorical) uses the exact gradient of the ELBO, given a mini batch, which we obtain by exactly marginalizing the $K=10$ assignments of the latent variable.

\paragraph{Prior, posterior, and KL term in the ELBO.} The Gaussian model employs a standard Gaussian prior and a parametrized Gaussian posterior approximation, the KL term in the ELBO is computed in closed-form. The Dirichlet model employs a uniform Dirichlet prior (concentration $1$) and a parametrized Dirichlet posterior approximation, the KL term in the ELBO is also computed in closed-form. Both the Categorical and the Gumbel-Softmax ST models employ a uniform Categorical prior and a Categorical approximate posterior (albeit parametrized via Gumbel-Softmax ST for the latter model), the KL term in the ELBO is between two Categorical distributions and thus computed in closed form. The Mixed Dirichlet model employs a maximum entropy prior with bit-precision parameter $N=0$ and a parametrized Mixed Dirichlet approximate posterior, the KL term in the ELBO is partly exact and partly estimated, in particular, the contribution of the Gibbs distributions over faces is computed in closed form.

\paragraph{Additional plots.} Figure \ref{fig:extra-tsne} shows tSNE plots where each validation digit is encoded by a sample from the approximate posterior obtained by conditioning on that digit. The continuous (Gaussian and Dirichlet) and mixed (Mixed Dirichlet) models learn to represent the digit rather well, with some plausible confusions by the Mixed Dirichlet model (\textit{e.g.}, 4, 7, and 9).
The models that can only encode data using the vertices of the simplex (Categorical and Gumbel-Softmax ST) typically map multiple digits to the same vertex of the simplex.
For models whose priors support the vertices of the simplex (Mixed Dirichlet, Categorical, and Gumbel-Softmax ST), we can inspect what the vertices of the simplex typically map to (in data space). 
Figure \ref{fig:extra-corners} shows 100 such samples per vertex. We also include samples from the Dirichlet model, though note that the Dirichlet prior does not support the vertices of the simplex, thus we sample from points very close to the vertices (but in the relative interior of the simplex). 
We also report conditional and unconditional generations from each model in Figures \ref{fig:extra-posterior} and \ref{fig:extra-prior}.

\begin{figure}[t]
    \centering
    \includegraphics[width=0.19\textwidth]{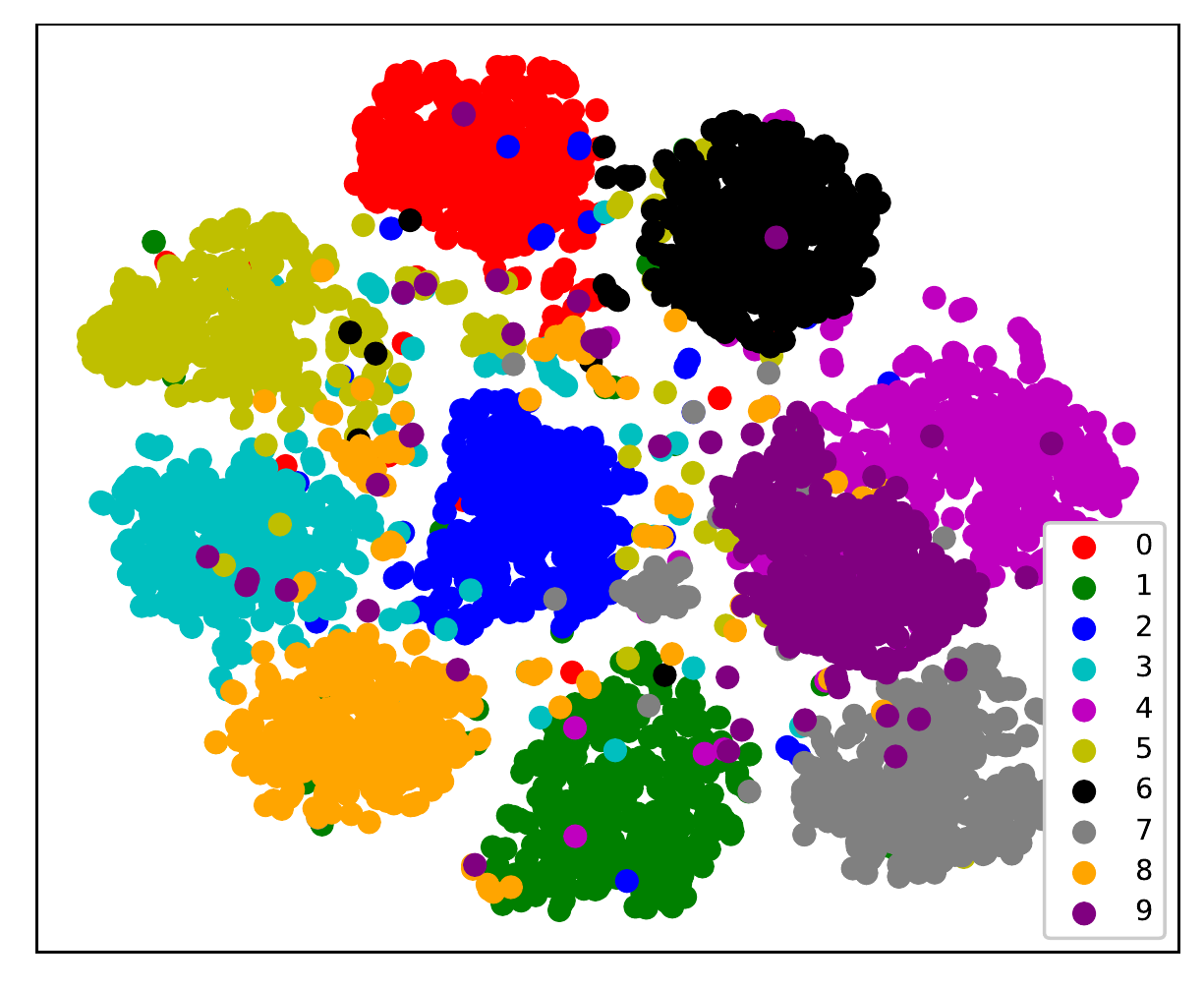}
    \includegraphics[width=0.19\textwidth]{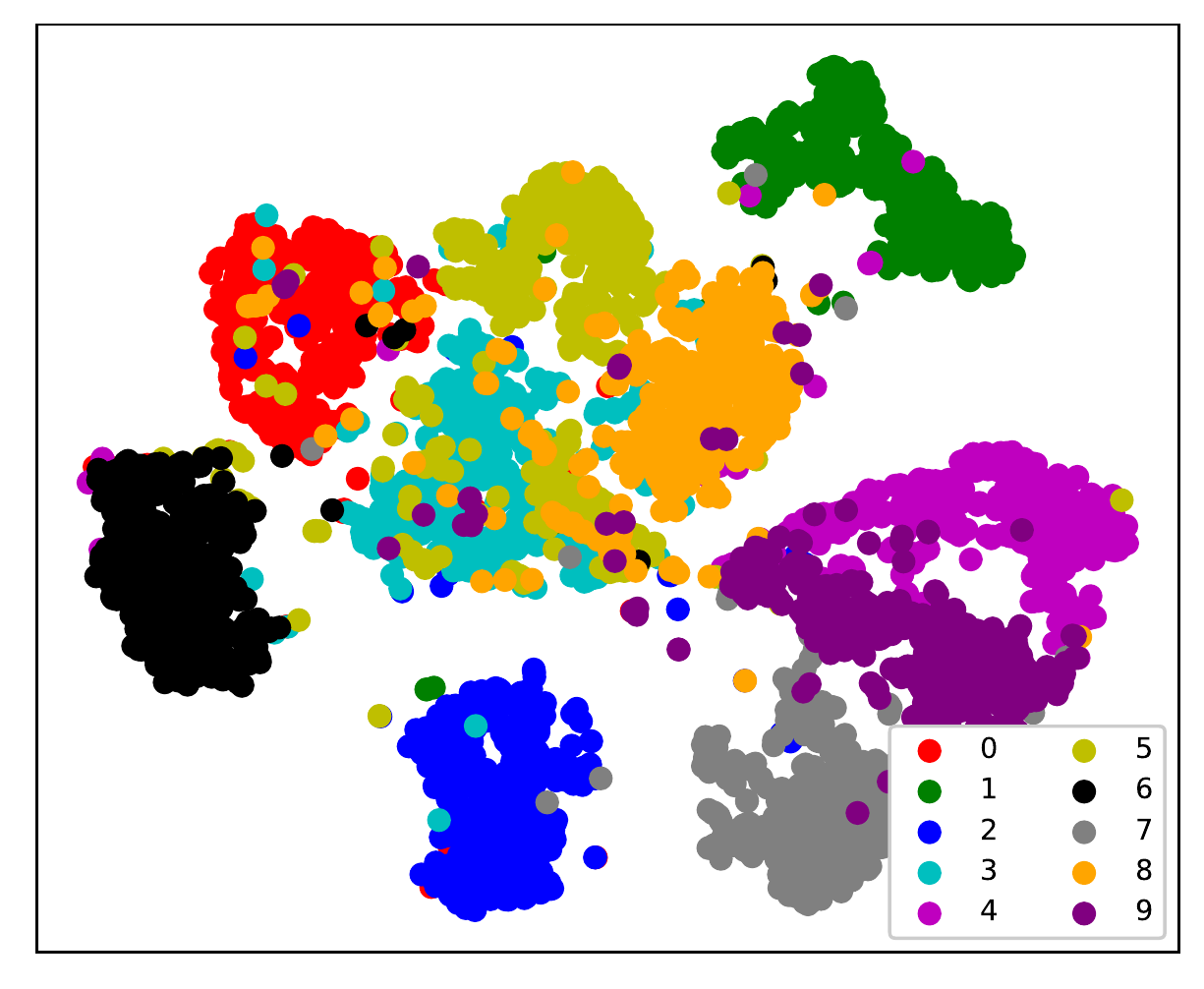}
    \includegraphics[width=0.19\textwidth]{figures/mixed-dirichlet-tsne-y.pdf}
    \includegraphics[width=0.19\textwidth]{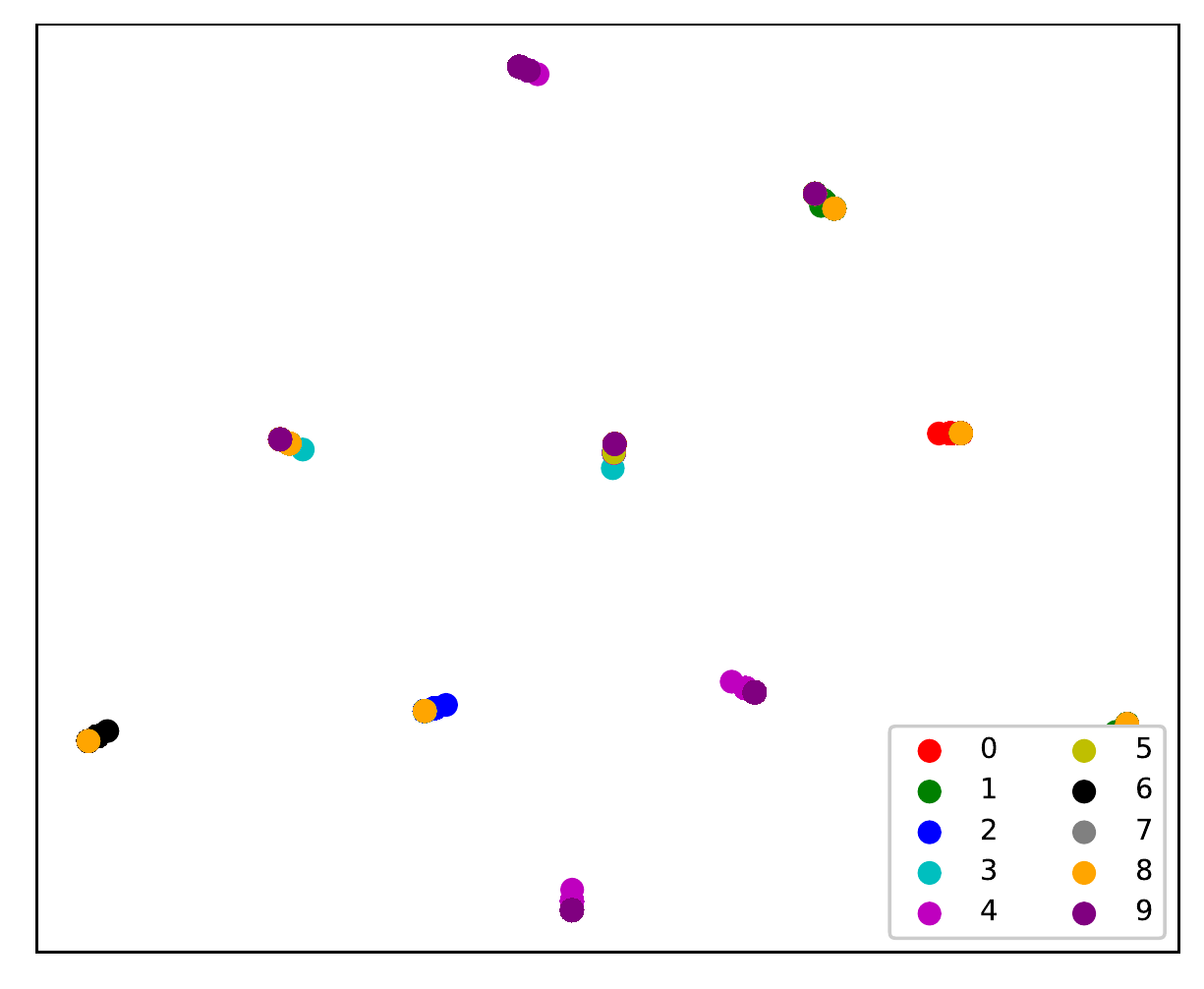}
    \includegraphics[width=0.19\textwidth]{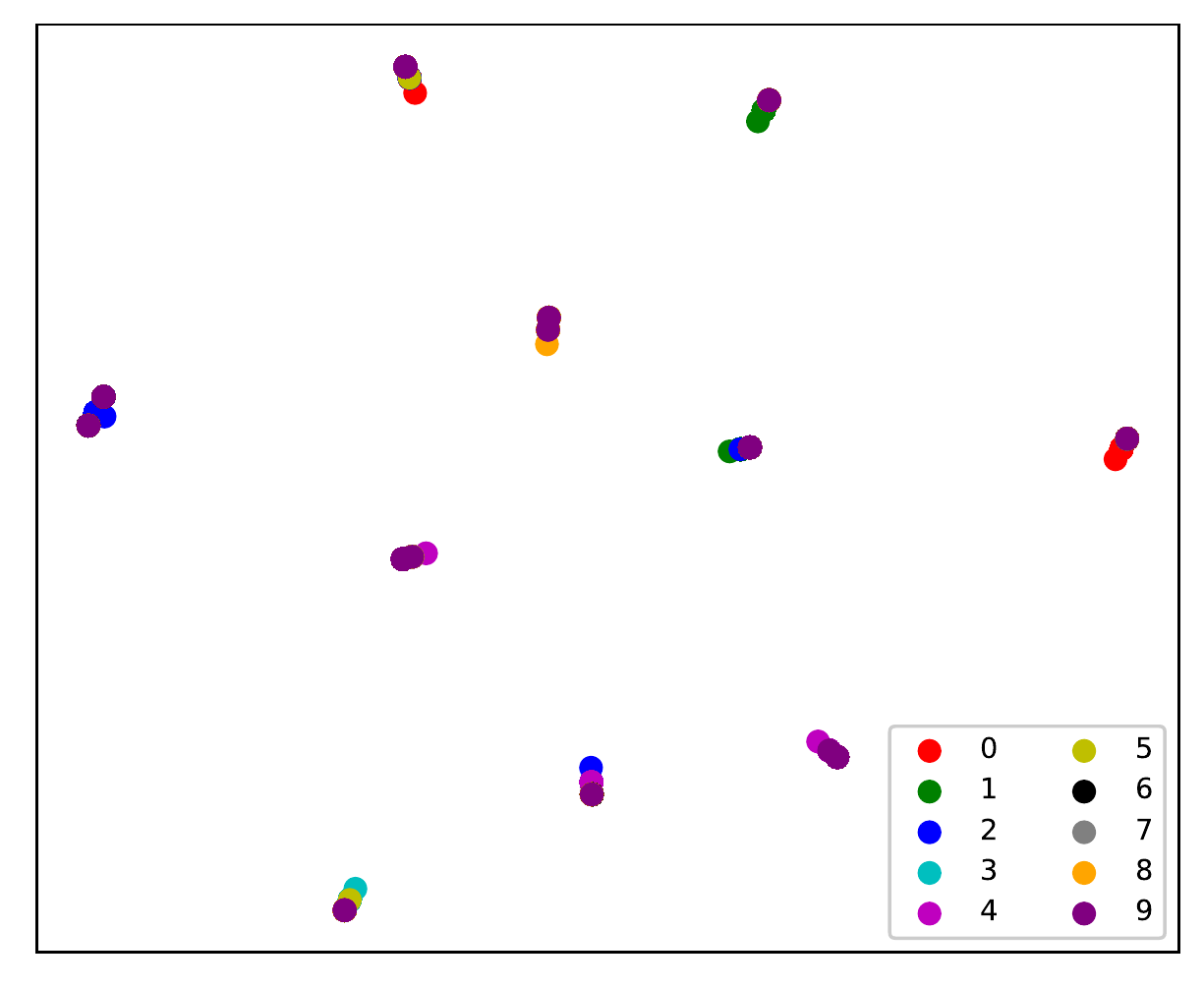}
    \caption{tSNE plots of posterior samples, from left-to-right: Gaussian, Dirichlet, Mixed Dirichlet, Categorical, Gumbel-Softmax ST.}
    \label{fig:extra-tsne}
\end{figure}

\begin{figure}[t]
    \centering
    \includegraphics[width=0.24\textwidth]{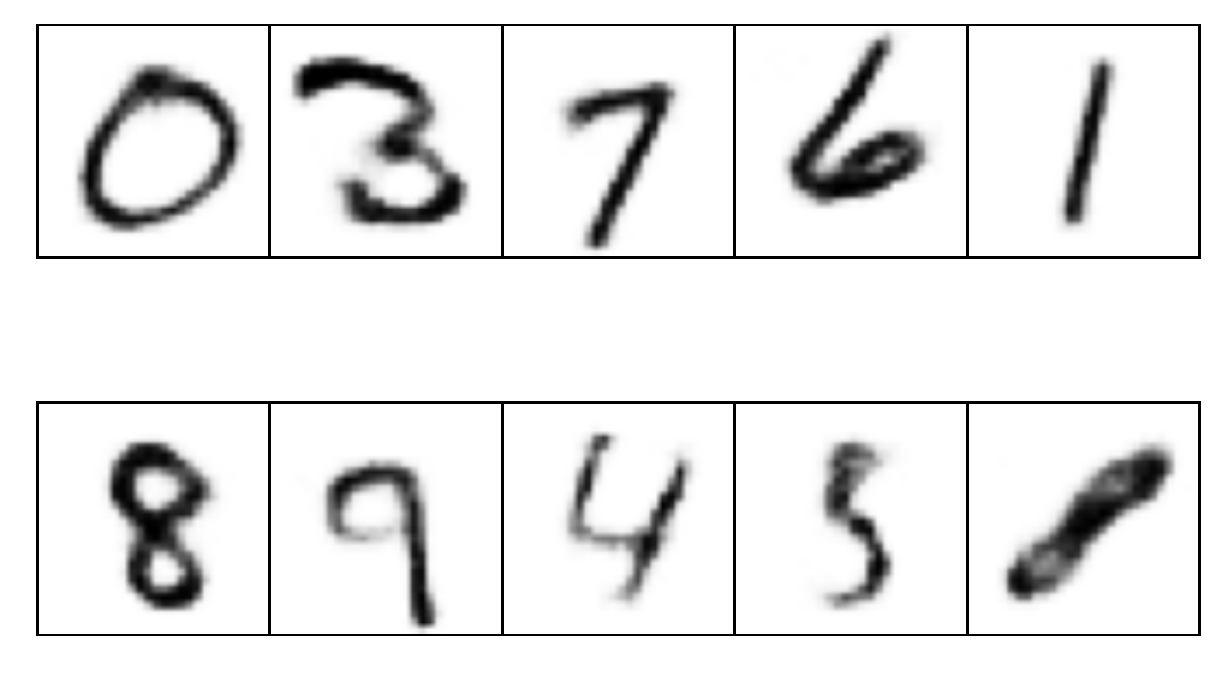}
    \includegraphics[width=0.24\textwidth]{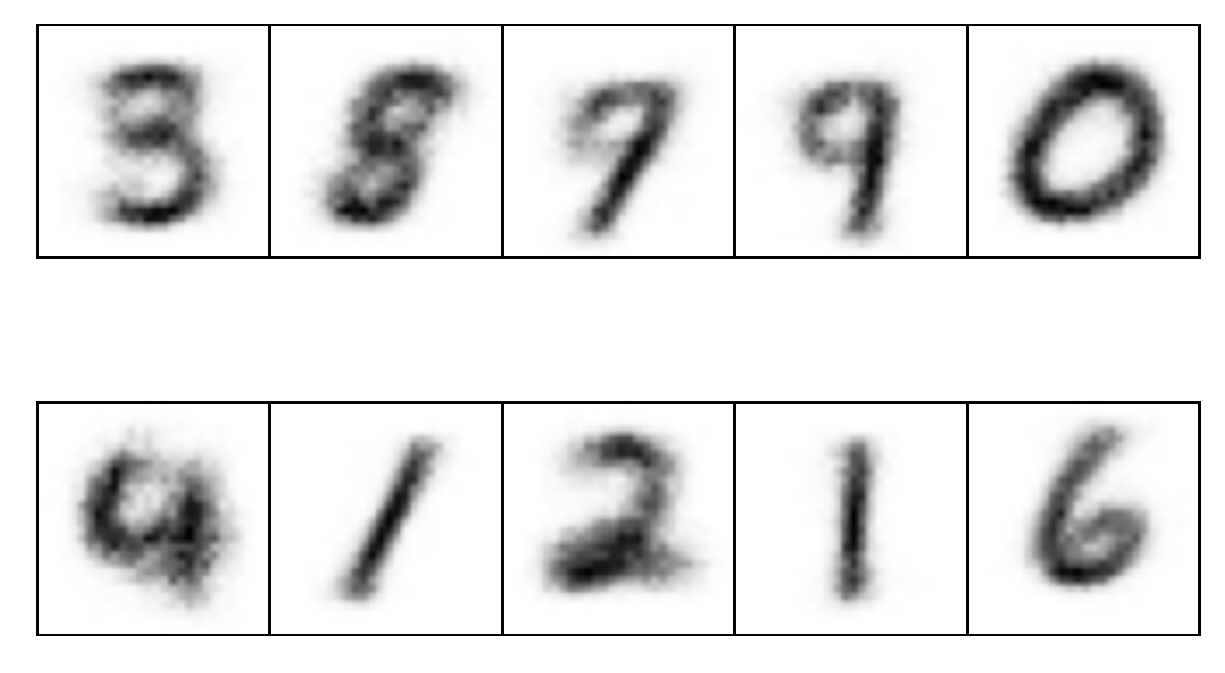}
    \includegraphics[width=0.24\textwidth]{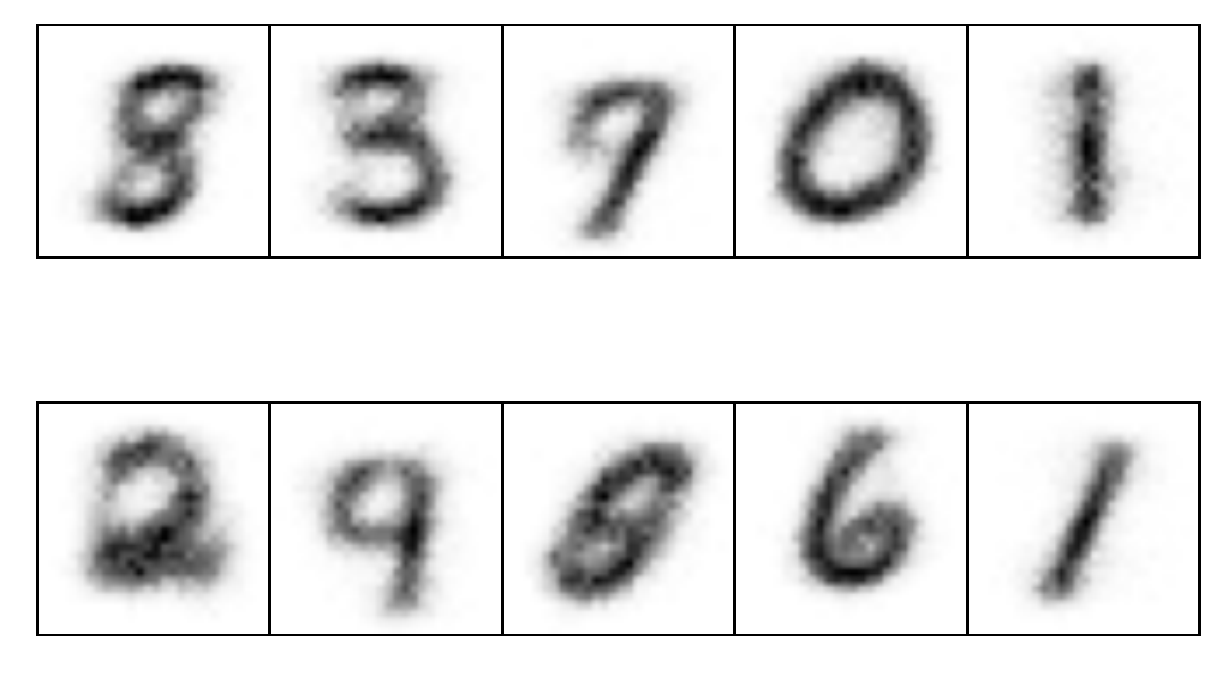}
    \includegraphics[width=0.24\textwidth]{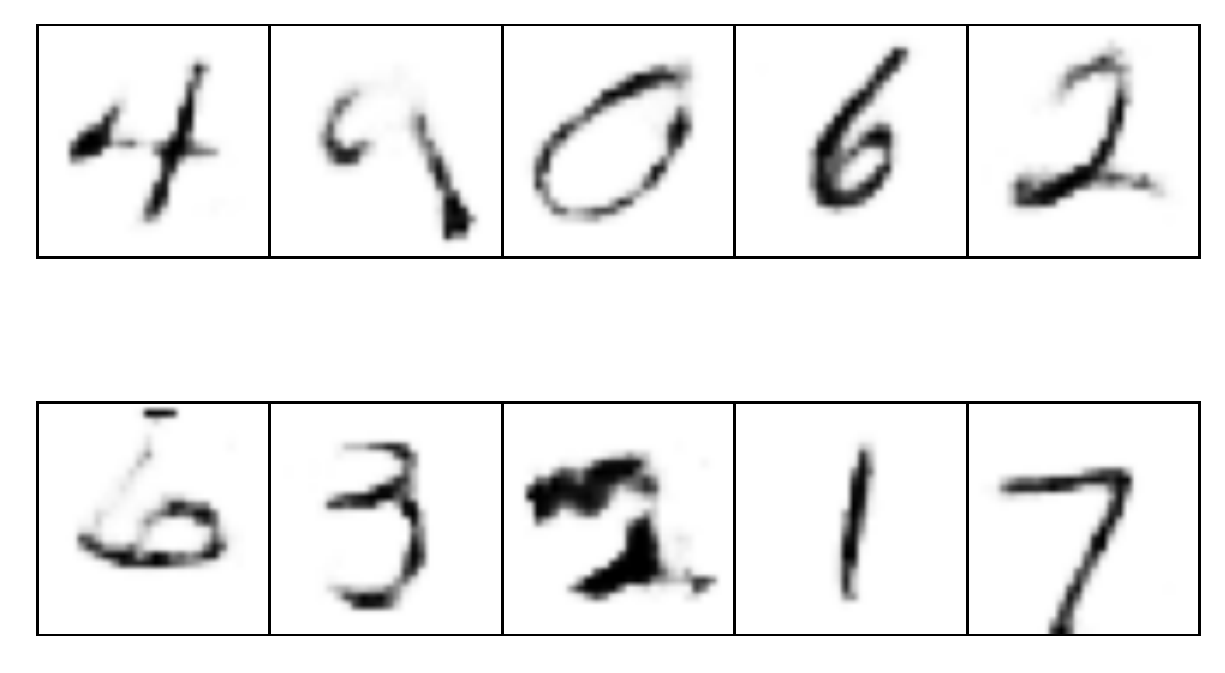}
    \caption{Pixel-wise average of 100 model samples generated by conditioning on each of the ten vertices of the simplex. From left-to-right: Mixed Dirichlet, Categorical, Gumbel-Softmax ST, and Dirichlet (as the Dirichlet does not support the vertices of the simplex, we add uniform noise (between 0 and 0.1) to each coordinate of a vertex and renormalize).}
    \label{fig:extra-corners}
\end{figure}

\begin{figure}[t]
    \centering
    \includegraphics[width=0.19\textwidth]{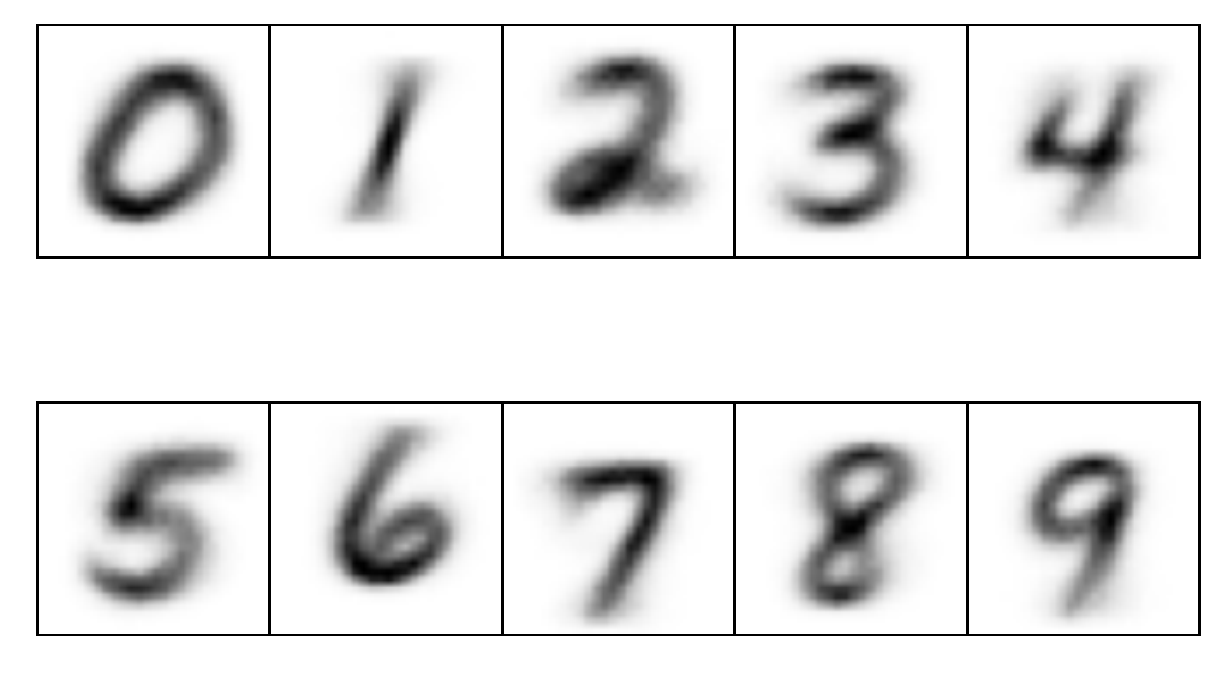}
    \includegraphics[width=0.19\textwidth]{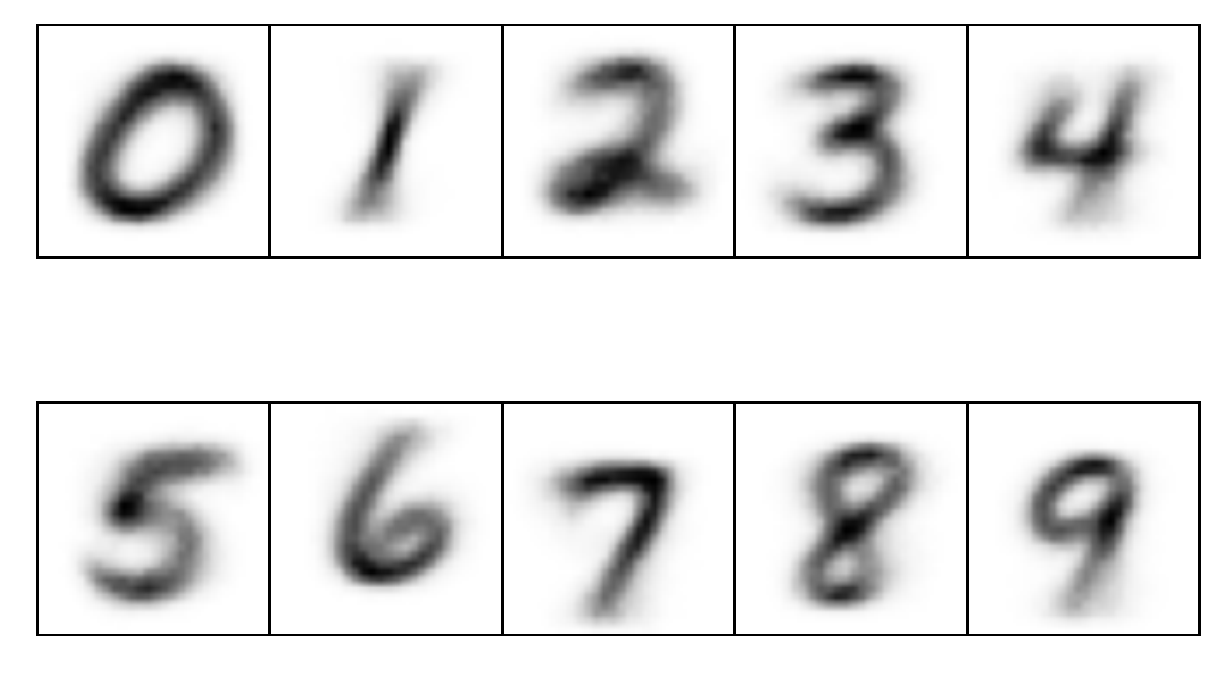}
    \includegraphics[width=0.19\textwidth]{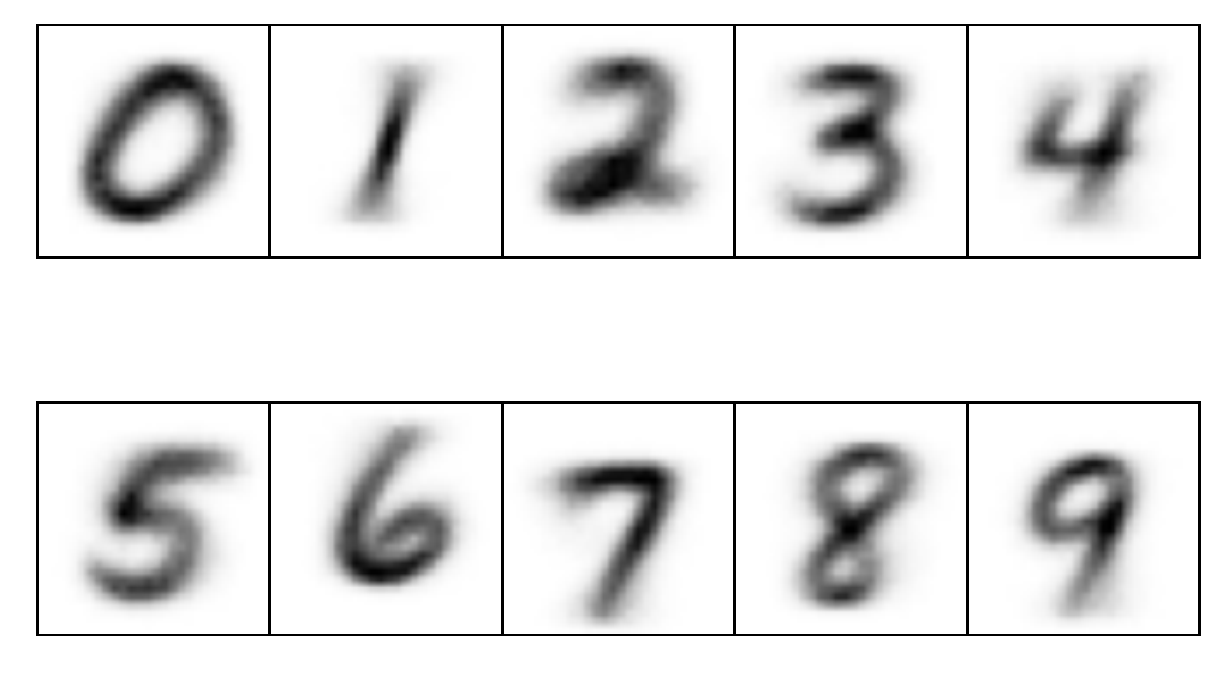}
    \includegraphics[width=0.19\textwidth]{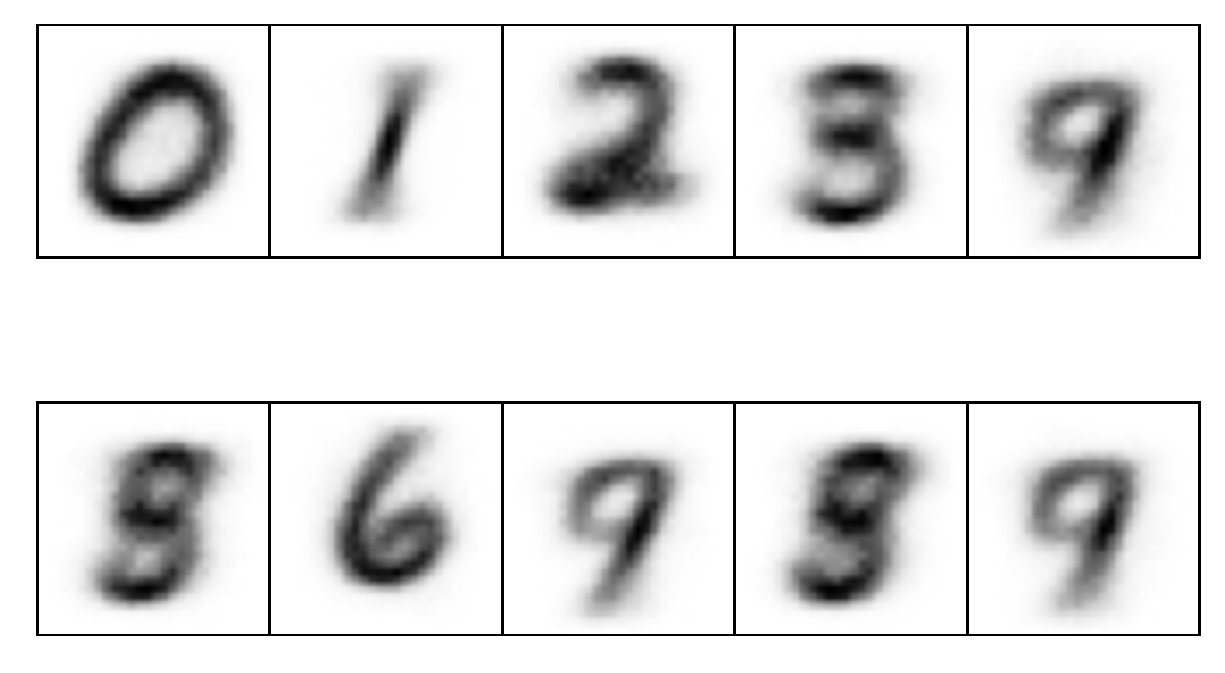}
    \includegraphics[width=0.19\textwidth]{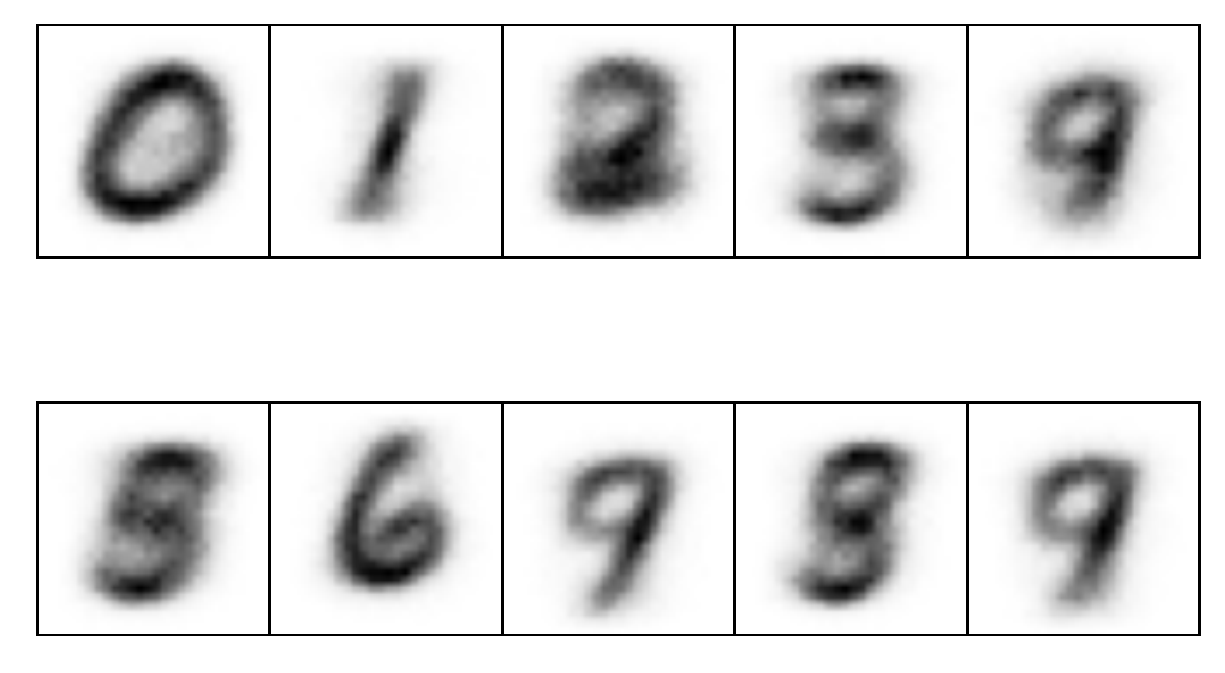}
    \caption{Conditional generation. For each instance of each class in the validation set (note that none of the models has access to the label), we sample a latent code conditioned on the digit, and re-sample a digit from the model. The illustration displays the pixel-wise average across all validation instances of the same class. From left-to-right: Gaussian, Dirichlet, Mixed Dirichlet, Categorical, Gumbel-Softmax ST.}
    \label{fig:extra-posterior}
\end{figure}

\begin{figure}[t]
    \centering
    \includegraphics[width=0.19\textwidth]{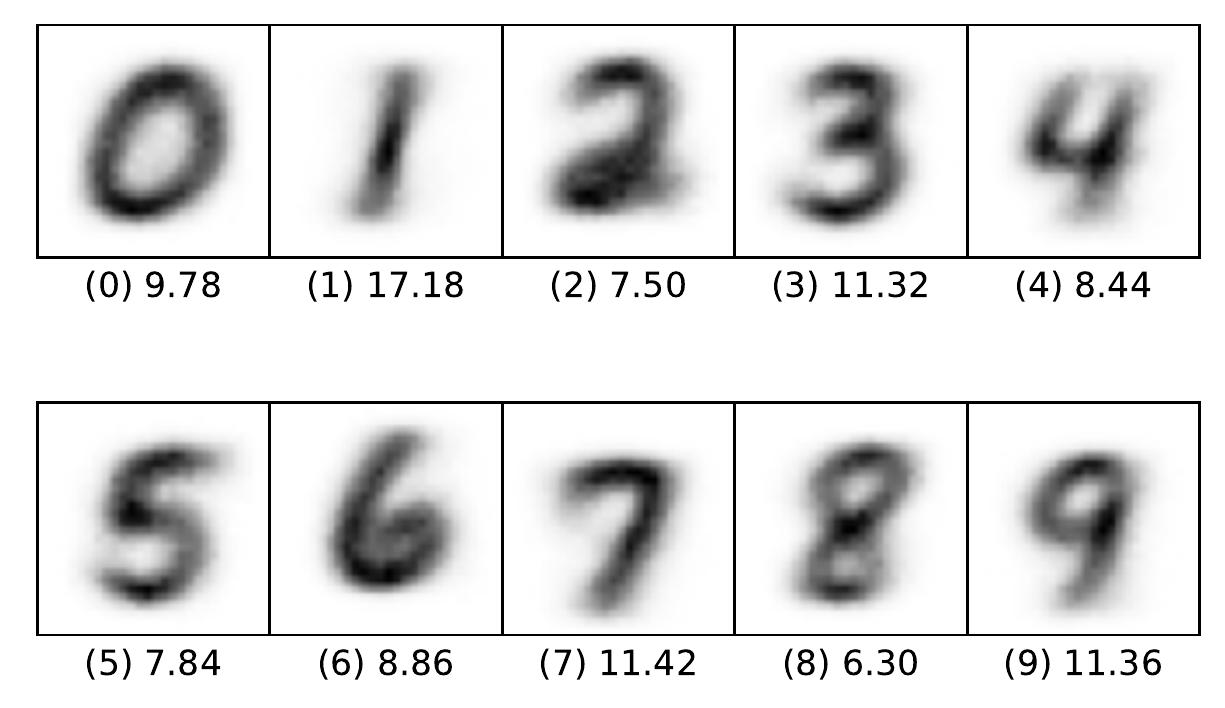}
    \includegraphics[width=0.19\textwidth]{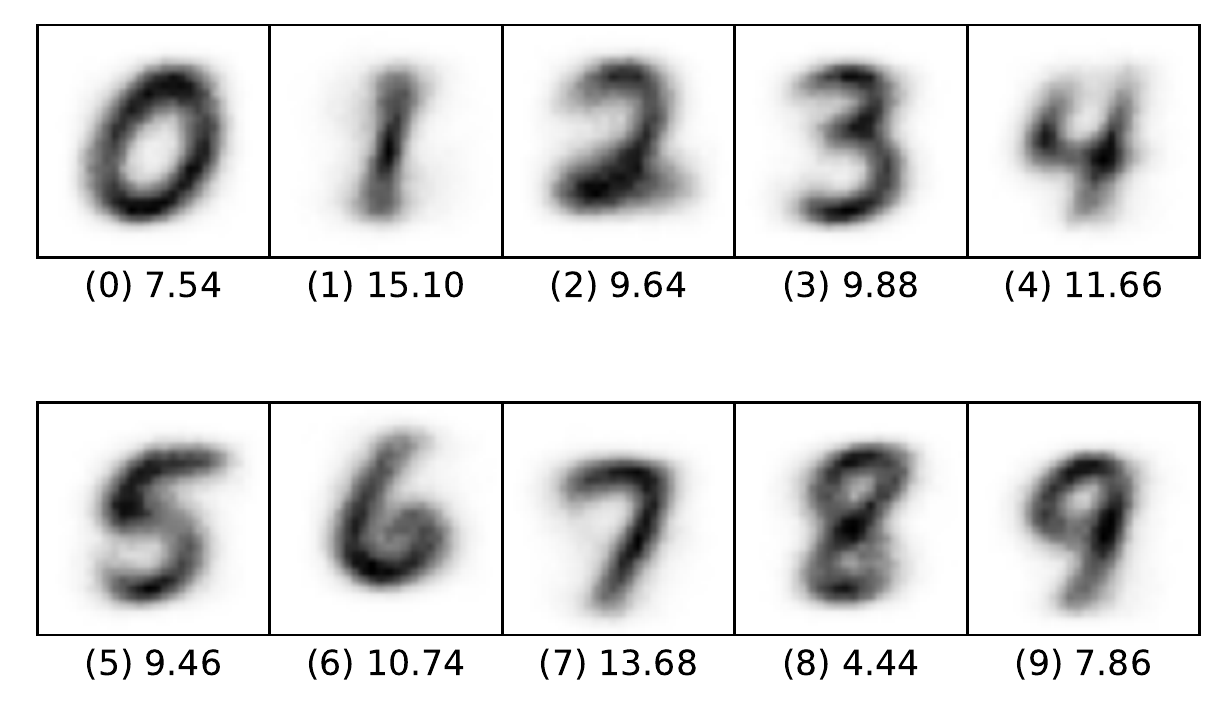}
    \includegraphics[width=0.19\textwidth]{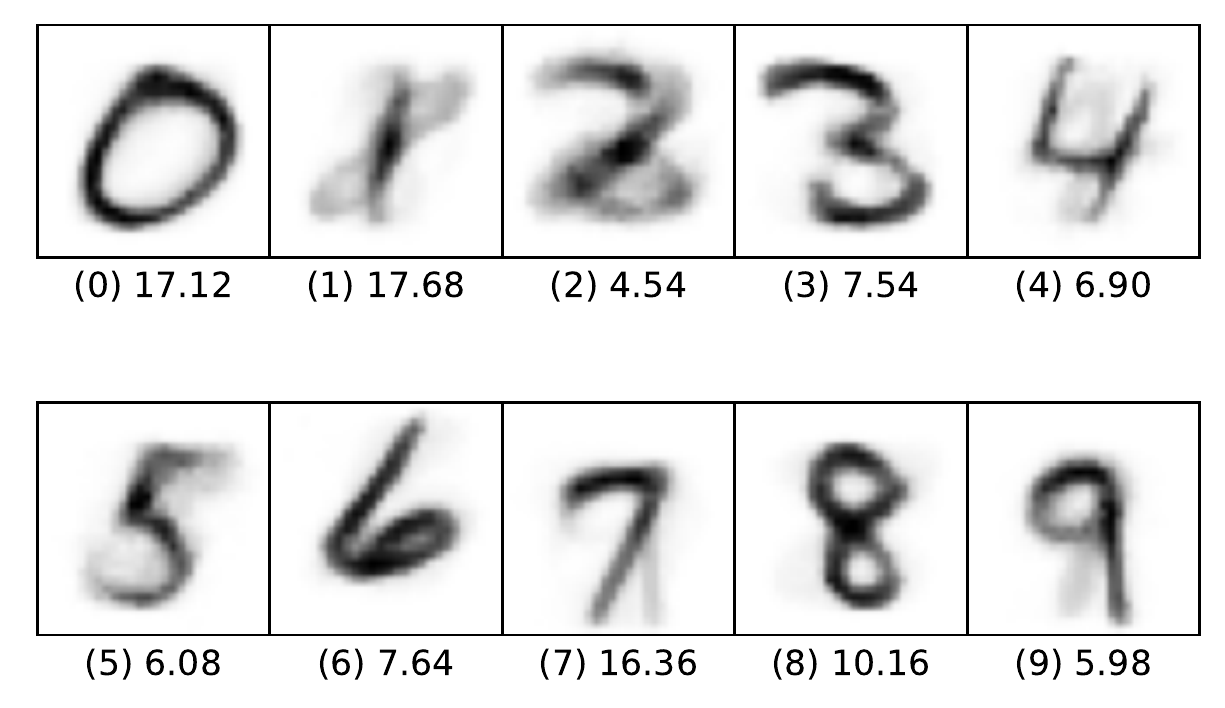}
    \includegraphics[width=0.19\textwidth]{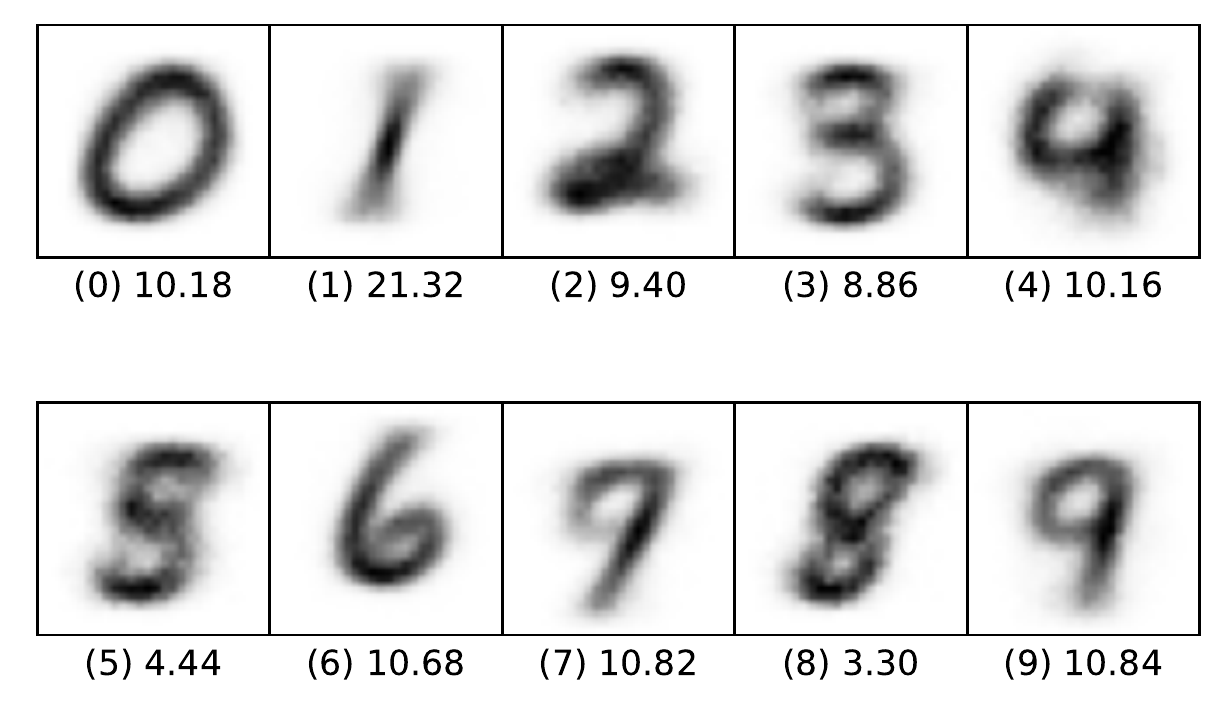}
    \includegraphics[width=0.19\textwidth]{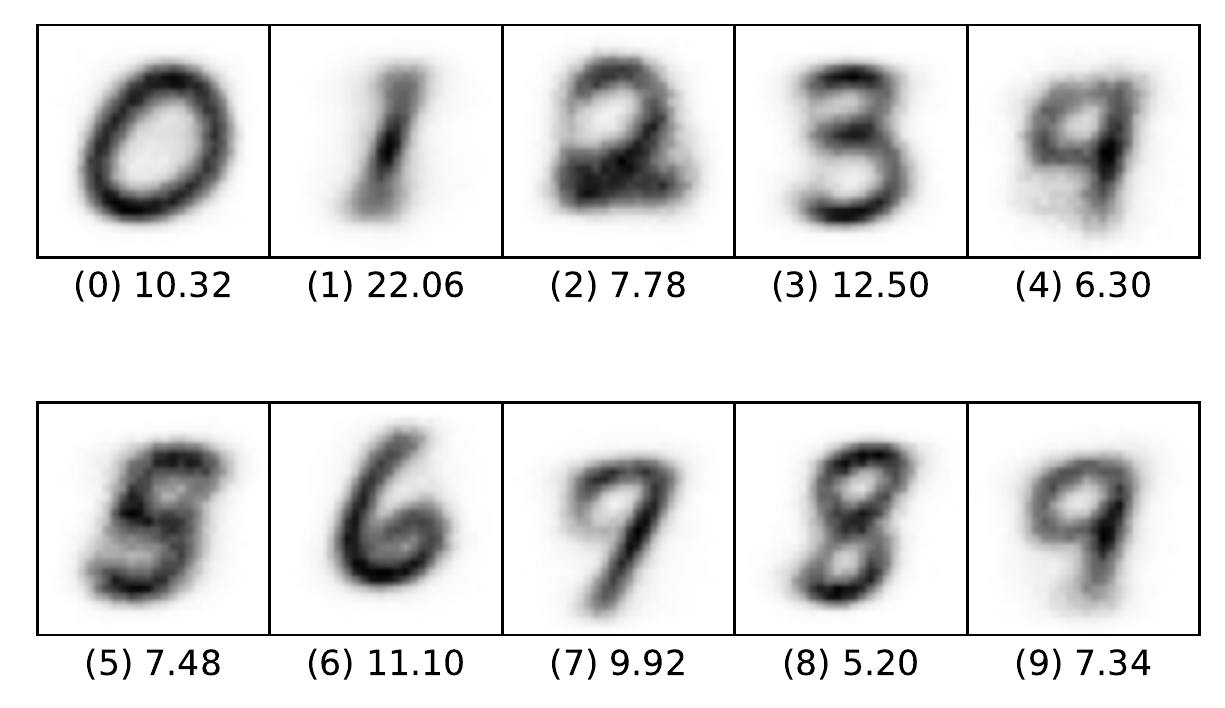}
    \caption{Unconditional generation. For each model, we sample a digit via ancestral sampling (\textit{i.e.}, $z$ is sampled from the prior, then, given $z$, $x$ is sampled from the observation model). We gather 5000 such samples and group them by class as predicted by a $5$-nearest neighbour classifier trained on the MNIST training set (we use \texttt{kd tree} from scikit-learn), the classifier achieves 95\% F1 on the MNIST test set. Each image in the illustration is a pixel-wise average of the samples in the cluster, we also report the percentage of digits in each cluster. From left-to-right: Gaussian, Dirichlet, Mixed Dirichlet, Categorical, Gumbel-Softmax ST.}
    \label{fig:extra-prior}
\end{figure}

\footnotetext{Available under the terms of the Creative Commons Attribution-Share Alike 3.0 license (\url{https://creativecommons.org/licenses/by-sa/3.0/}).}

\subsection{Simplex-Valued Regression}
\label{App:GLM}

Simplex-valued data are observations in the form of probability vectors, they appear in statistics in contexts such as time series (\emph{e.g.}, modeling polling data) and in machine learning in contexts such as knowledge distillation (\emph{e.g.}, teaching a compact network to predict the  categorical distributions predicted by a larger network).

\paragraph{Data and task.} 
We experiment with the UK election data setup by \citet[][Section 5.2]{pmlr-v119-gordon-rodriguez20a}.\footnote{\url{https://commonslibrary.parliament.uk/research-briefings/cbp-8749/}} The UK electorate is partitioned into 650 constituencies, each electing one member of parliament in a winner-takes-all vote.  Hence the data are 650 observed vectors of proportions over the four major parties plus a `remainder' category (\emph{i.e.}, each observation is a point in the $\Delta_{5-1}$ simplex, including its faces). 
Modeling simplex-valued data with the Dirichlet distribution is tricky for the Dirichlet does not support sparse outcomes. While pre-processing the data  into the relative interior of the simplex is a simple strategy (\emph{e.g.}, add small positive noise to coordinates and renormalize), it is ineffective for the Dirichlet pdf either diverges or vanishes at extrema (neighbourhoods of the lower-dimensional faces of the simplex). \citet{pmlr-v119-gordon-rodriguez20a} document this and other difficulties in modeling with the Dirichlet likelihood function. To address these limitations they develop the \emph{continuous categorical} (CC) distribution, an exponential family that supports the entire simplex (\emph{i.e.}, it assigns non-zero density to any point in the simplex), and does not diverge at the extrema. The CC enjoys various analytical properties, but it still cannot assign non-zero mass to the lower-dimensional faces of the simplex, thus while it is a better choice of likelihood function than the Dirichlet, CC samples are never truly sparse (thus test-time predictions are always dense).
\paragraph{Architecture and hyperparameters.} For Dirichlet and CC, we use a linear layer to map from the input predictors to 5 log-concentration parameters. For Mixed Dirichlet we use 2 linear layers: one maps from the input predictors to 5 scores (clamped to $[-10, 10]$) which parametrize $P_F$, the other maps to 5 strictly positive concentrations (we use softplus activations, with pre activations constrained to $[-10, 10]$) which parametrize $P_{Y|F=f}$. We train all models using Adam with learning rate $0.1$ and no weight decay for exactly 400 steps without mini-batching with 20\% of the available data used for training. Following \citet{pmlr-v119-gordon-rodriguez20a}, we pre-process the data (add $10^{-3}$ to each coordinate and renormalize) for Dirichlet and CC, but this is not done for Mixed Dirichlet. 

\begin{figure}[t]
    \centering
    \includegraphics[width=0.9\textwidth]{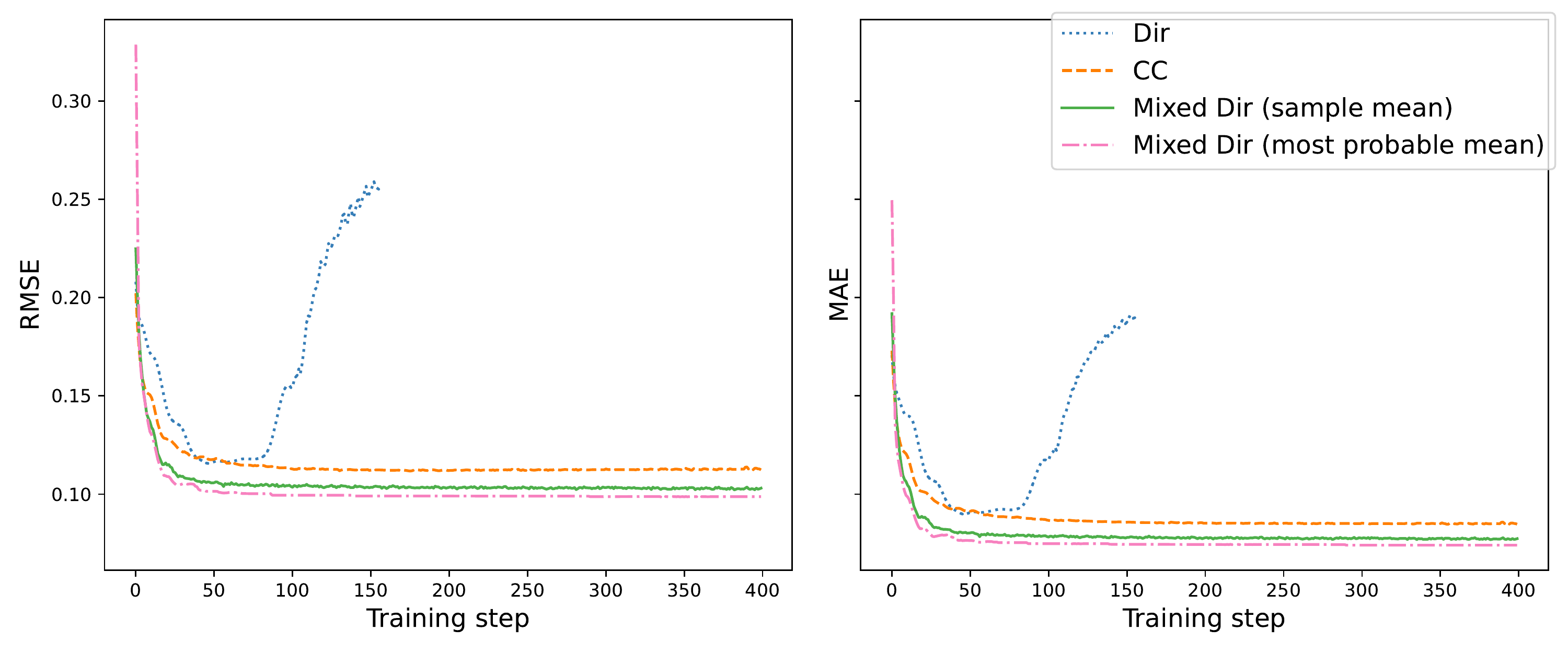}
    \caption{Test error  (root-mean-square error on the left,  mean absolute error on the right) of generalized linear regression towards 5-dimensional vote proportions (UK election data). We compare 3 likelihood functions: Dirichlet, Continuous Categorical (CC), and Mixed Dirichlet.}
    \label{fig:uk-votes}
\end{figure}

\paragraph{Results.} Figure \ref{fig:uk-votes} compares the three choices of likelihood function. 
We report two prediction rules for our Mixed Dirichlet model. \emph{Sample mean}: we predict stochastically by drawing 100 samples and outputting the sample mean. \emph{Most probable mean}: we predict deterministically by outputting the mean of the Dirichlet on the face that is assigned highest probability by the model (finding this \emph{most probable face} is an operation that takes time $\mathcal O(K)$, and recall $K=5$ in this task). 
We can see that Mixed Dirichlet does not suffer from the pathologies of the Dirichlet, due to face stratification, and lowers test error a bit more than CC (see Table \ref{fig:uk-votes}), likely due to the ability to sample actual zeros. 
\begin{wraptable}[10]{r}{0.42\textwidth}
    \centering
    \small
    \begin{tabular}{l r r}
    \toprule
        {\bf Model} & {\bf RMSE} & {\bf MAE}  \\  \midrule
        CC & 0.1124 & 0.0847 \\
        Mixed Dirichlet & \\
        ~ sample mean & 0.1030 & 0.0774 \\
        ~ most probable mean & 0.0987 & 0.0740 \\ \bottomrule
    \end{tabular}
    \caption{Test root-mean-square error and mean absolute error as a function of choice of likelihood function and prediction rule.}
    \label{tab:uk-vote}
\end{wraptable}
The Mixed Dirichlet uses twice more parameters (we need to parametrize two components), but training time is barely affected (sampling and density assessments are all linear in $K$), the training loss and its gradients are stable, and the algorithm converges just as early as CC's.
As the Mixed Dirichlet produces sparse samples, it is interesting to inspect how often it succeeds to predict whether an output coordinate is zero or not (\emph{i.e.}, whether $y_k > 0$, which is true for 77.3\% of the targets in the text set). The sample mean predicts whether $y_k > 0$ with macro F1 0.92, whereas the most probable mean achieves macro F1 0.94.

\subsection{Computing infrastructure}
\label{App:computing_infrastructure}

Our infrastructure consists of 5 machines with the specifications
shown in Table~\ref{table:computing_infrastructure}. The machines
were used interchangeably, and all experiments were executed in a
single GPU. Despite having machines with different specifications, we
did not observe large differences in the execution time of our models
across different machines.

\begin{table}[!hb]
    \small
    \caption{Computing infrastructure.}
    \begin{center}
    \begin{tabular}{l ll}
        \toprule
        \sc \# & \sc GPU & \sc CPU  \\
        \midrule
        1.   & 4 $\times$ Titan Xp - 12GB           & 16 $\times$ AMD Ryzen 1950X @ 3.40GHz - 128GB \\
        2.   & 4 $\times$ GTX 1080 Ti - 12GB        & 8 $\times$ Intel i7-9800X @ 3.80GHz - 128GB \\
        3.   & 3 $\times$ RTX 2080 Ti - 12GB        & 12 $\times$ AMD Ryzen 2920X @ 3.50GHz - 128GB \\
        4.   & 3 $\times$ RTX 2080 Ti - 12GB        & 12 $\times$ AMD Ryzen 2920X @ 3.50GHz - 128GB \\
        5.   & 2 $\times$ GTX Titan X - 12GB        & 12 $\times$ Intel Xeon E5-1650 v3 @ 3.50GHz - 64 GB \\        
        \bottomrule
    \end{tabular}
    \end{center}
    \label{table:computing_infrastructure}
\end{table}

\end{document}